\newcommand{\RR}{\mathbb{R}}
\newcommand{\EE}{\mathbb{E}}
\newcommand{\ww}{\mathbf{w}}
\newcommand{\pp}{\mathbf{p}}
\newcommand{\cC}{\mathcal{C}}
\newcommand{\cQ}{\mathcal{Q}}
\newcommand{\overbar}[1]{\mkern 1.5mu\overline{\mkern-1.5mu#1\mkern-1.5mu}\mkern 1.5mu}
\newcommand{\bigL}{L^2(\Omega,\sigma; \mathbb{R}^{d})}
\newcommand{\bigLnosigma}{L^2(\Omega; \mathbb{R}^{d})}
\newcommand{\normusigma}{\norm{u}_{H^1(\Omega,\sigma)}}
\newcommand{\onelip}{1\text{-Lip}(\Omega)}
\newcommand{\honesigma}{H^1(\Omega,\sigma)}
\newenvironment{customthm}[1]
  {\innercustomthm}
  {\endinnercustomthm}
\DeclareMathOperator*{\dist}{dist}
\DeclareMathOperator*{\diam}{diam}
\newcommand\vsni{\vspace{.1 in}\noindent}
\DeclarePairedDelimiter\norm{\lVert}{\rVert}%
\DeclarePairedDelimiter\abs{\lvert}{\rvert}%
\let\oldabs\abs
\def\abs{\@ifstar{\oldabs}{\oldabs*}}
\let\oldnorm\norm
\def\norm{\@ifstar{\oldnorm}{\oldnorm*}}
\title[WGANs with Gradient Penalty Compute Congested Transport]{Wasserstein GANs with Gradient Penalty Compute Congested Transport}
\begin{document}
\maketitle

\begin{abstract}%
Wasserstein GANs with Gradient Penalty (WGAN-GP) are a very popular method for training generative models to produce high quality synthetic data. While WGAN-GP were initially developed to calculate the Wasserstein $1$ distance between generated and real data, recent works (e.g. \cite{stanczuk2021wasserstein}) have provided empirical evidence that this does not occur, and have argued that WGAN-GP perform well not in spite of this issue, but because of it. In this paper we show for the first time that WGAN-GP compute the minimum of a different optimal transport problem, the so-called congested transport \cite{carlier2008optimal}. Congested transport determines the cost of moving one distribution to another under a transport model that penalizes congestion. For WGAN-GP, we find that the congestion penalty has a spatially varying component determined by the sampling strategy used in \cite{gulrajani2017improved} which acts like a local speed limit, making congestion cost less in some regions than others. This aspect of the congested transport problem is new, in that the congestion penalty turns out to be unbounded and depends on the distributions to be transported, and so we provide the necessary mathematical proofs for this setting. One facet of our discovery is a formula connecting the gradient of solutions to the optimization problem in WGAN-GP to the time averaged momentum of the optimal mass flow. This is in contrast to the gradient of Kantorovich potentials for the Wasserstein 1 distance, which is just the normalized direction of flow. Based on this and other considerations, we speculate on how our results explain the observed performance of WGAN-GP. Beyond applications to GANs, our theorems also point to the possibility of approximately solving large scale congested transport problems using neural network techniques.
\end{abstract}

\section{Introduction}
\label{sec:introduction}
Wasserstein GANs (WGANs) were first proposed in \cite{arjovsky2017wasserstein} as a means of training generative models using the Wasserstein 1 distance to measure the dissimilarity of the real and generated distributions. Recall that the Wasserstein 1 distance (or Earth Mover's Distance) between two probability distributions $\mu$ and $\nu$ on a subset $\Omega$ of $\RR^d$ can be calculated via duality as
\begin{equation}
W_1(\mu, \nu) = \sup_{u \in \onelip} \EE_{x \sim \mu}[u(x)] - \EE_{y \sim \nu}[u(y)],\label{prob:W1dual}
\end{equation}
where $\onelip$ is the set of $1$-Lipschitz real valued functions on $\Omega$. The authors of \cite{arjovsky2017wasserstein} showed that $W_1(\mu,\nu)$ has better theoretical properties than the Jenson-Shannon divergence used in the original GAN paper \cite{goodfellow2014generative}, and proposed to solve \eqref{prob:W1dual} by parametrizing $u$ as a neural network $u_w$ with parameters $w$. This network, dubbed the ``critic'', replaced the discriminator of the original GAN. However, designing critics $u_w$ which are $1$-Lipschitz and sufficiently expressive is a non-trivial task which has only more recently seen some progress \cite{anil2019sorting}.

As an initial resolution to this issue, the authors of \cite{arjovsky2017wasserstein} used weight-clipping, but the performance of WGANs in terms of the visual quality of generated images and training stability was greatly improved in \cite{gulrajani2017improved}, wherein weight clipping was discarded in favour of solving the following optimization problem, 
\begin{equation}
\sup_{u} \EE_{x \sim \mu}[u(x)] - \EE_{y \sim \nu}[u(y)] - \lambda \EE_{z \sim \sigma}[(|\nabla u(z)|-1)^2].\label{eq:wgan-gpproblem}
\end{equation}
Here the hard constraint in \eqref{prob:W1dual} that $u$ be $1$-Lipschitz (i.e. $|\nabla u| \leq 1$ almost everywhere) is replaced with a penalty term which penalizes $|\nabla u|$ being different from $1$. The probability distribution $\sigma$ is defined by the following procedure: independently sample $x \sim \mu$, $y \sim \nu$, and $t$ from the uniform distribution on $[0,1]$ to obtain the point $z \sim \sigma$ given by the formula
\begin{equation*}
z = (1-t)x + ty.
\end{equation*}
It was observed in \cite{gulrajani2017improved} that the removal of the hard constraint enforced by weight-clipping allowed for a much more expressive class of functions at the cost of sacrificing guarantees of the functions being $1$-Lipschitz. Generative models trained with this approach are called WGAN-GP (GP for gradient penalty), and they have emerged as a popular method for stably training generative models to produce high-quality synthetic images \cite{karras2018progressive, kurach2019large}. 

The authors of \cite{gulrajani2017improved} also suggested modifying \eqref{eq:wgan-gpproblem} to
\begin{equation}
\sup_{u} \EE_{x \sim \mu}[u(x)] - \EE_{y \sim \nu}[u(y)] - \lambda\EE_{z \sim \sigma}[(|\nabla u(z)|-1)_+^2] \label{eq:onesidedwgan-gpproblem},
\end{equation}
which uses a one-sided gradient penalty ($a_+:= \max(a, 0)$ for $a\in \RR$). This is the version of WGAN-GP studied in this paper. This is because it is a more natural penalty term than the two-sided penalty for encouraging functions to be 1-Lipschitz, since it only penalizes gradients larger than one in norm. Further, it has the important advantage of being convex in $u$. Finally, studies (e.g.  \cite{gulrajani2017improved, petzka2018regularization}) have shown that it obtains equal or better performance than the two-sided penalty in \eqref{eq:wgan-gpproblem}, and it has seen repeated use (see, for example, \cite{lunz2018adversarial, milne2021trust, petzka2018regularization}).

While WGAN-GP have enjoyed spectacular success, the question of whether they are actually computing the Wasserstein 1 distance has only been studied more recently in, for example, \cite{mallasto2019well}, \cite{pinetz2019estimation}, and \cite{stanczuk2021wasserstein}. In particular we were intrigued by \cite{stanczuk2021wasserstein}, which offers empirical evidence that WGAN-GP do \textit{not} compute $W_1(\mu,\nu)$, and, due to some issues with the Wasserstein $1$ distance, argues that this might be the reason for their success. In this paper we analyse this further by establishing for the first time the quantity that WGAN-GP with one-sided penalty \textit{do} compute: it is a congested transport distance. 

We will explain precisely what this means in Section \ref{sec:mainresults}, but vaguely, congested transport is a branch of optimal transport theory that seeks to model the optimal flow of mass under the effects of congestion.   That is, like moving through a busy city, the concentration of mass in a region affects the minimal time required to move through that region. This is distinguished from the typical optimal transport model, where the cost of moving mass from some point $x$ to another $y$ does not take into account the traffic along the way. The theory of congested transport for the continuous case (as opposed to the discrete one) was developed in \cite{brasco2010congested} and \cite{carlier2008optimal}; see also Chapter 4 of \cite{santambrogio2015optimal}.

The contributions of our paper are as follows:
\begin{enumerate}[i.]
\item Under mild assumptions on $\mu$ and $\nu$, we establish that the optimal value \eqref{eq:onesidedwgan-gpproblem} of WGAN-GP is equal to the minimal cost of moving $\mu$ to $\nu$ as determined by a congested transport model. \label{informalclaim:valiscongestedtransport}
\item We establish that the value of this minimal cost is not equal to $W_1(\mu,\nu)$ if $\mu \neq \nu$. More precisely, it is strictly larger for all $\lambda>0$, and at best it converges to $W_1(\mu,\nu)$ like $\lambda^{-1}$ as $\lambda \rightarrow \infty$, if it converges at all. \label{informalclaim:CTcostisnotW1cost}
\item Under slightly stronger assumptions on $\mu$ and $\nu$, we prove the existence of a solution $u_0$ to \eqref{eq:onesidedwgan-gpproblem} in an appropriate function space. \label{informalclaim:existenceofsolutions}
\item\label{informalclaim:gradudeterminesavgmomentum} Further, we show that there is a formula relating $\nabla u_0(x)$ to the time averaged momentum (i.e. mass times velocity) at $x$ of the optimal mass flow for the congested transport problem. This is in contrast to the standard Wasserstein 1 framework, where the gradient of a solution to \eqref{prob:W1dual} is just the direction of optimal mass transport, and does not encode the speed of that transport or the amount of mass transported.
\item Finally, we show that the statement in \ref{informalclaim:gradudeterminesavgmomentum} also holds approximately for approximate solutions of \eqref{eq:onesidedwgan-gpproblem}, which is significant in practice since numerical algorithms for solving \eqref{eq:onesidedwgan-gpproblem} will only produce approximate solutions. \label{informalclaim:approximategradudeterminesavgmomentum}
\end{enumerate} 
The plan for this paper is as follows. In Section \ref{sec:backgroundonCT} we provide the basic background on congested transport needed to state our main theorems. These are given in Section \ref{sec:mainresults}. We discuss related work in Section \ref{sec:relatedwork}. Proof sketches for our main results are presented in Section \ref{sec:proofofmaintheorem}, while the details are deferred to Appendix \ref{sec:appendix}. We provide in Section \ref{sec:discussion} some intuition for how our results may explain the observed performance of WGAN-GP, and summarize the paper in Section \ref{sec:conclusion}.
\section{Background on congested transport}
\label{sec:backgroundonCT}
In this section we will highlight the aspects of congested transport required to present our results. For more background see \cite{brasco2010congested} and \cite{carlier2008optimal}, or Chapter 4 of \cite{santambrogio2015optimal}.

The standard optimal transport problem for probability distributions $\mu$ and $\nu$ on a subset $\Omega$ of Euclidean space $\RR^d$ with cost $c$ is given by
\begin{equation}
\inf\{ \int_{\Omega \times \Omega} c(x,y) d\gamma \mid \gamma \in \mathcal{P}(\Omega \times \Omega), (\pi_x)_\# \gamma = \mu, (\pi_y)_\# \gamma = \nu\}. \label{prob:standardOTproblem}
\end{equation}
Here $\mathcal{P}(\Omega \times \Omega)$ is the set of probability distributions on $\Omega\times \Omega$, so that $\gamma$ can be thought of as a joint probability distribution of random variables $X$ and $Y$ taking values in $\Omega$. The maps $\pi_x, \pi_y : \Omega \times \Omega \rightarrow \Omega$ are the standard projections
\begin{equation*}
\pi_x(x,y) = x, \pi_y(x,y) = y,
\end{equation*}
and the pushforward measures $(\pi_x)_\# \gamma$, $(\pi_y)_\# \gamma$ are the marginals of $\gamma$. As alluded to above, the cost of moving one unit of mass from $x$ to $y$ is $c(x,y)$, which depends only on the initial and final position and not on the path taken between those points, nor on the presence or absence of other mass on that path. Based on this insight, and on earlier work for discrete problems \cite{wardrop1952road}, the theory of congested transport was developed in \cite{carlier2008optimal} to account for possible congestion effects. In this theory, the so-called ``transport plan'' $\gamma \in \mathcal{P}(\Omega\times \Omega)$ of \eqref{prob:standardOTproblem} is replaced with a probability distribution $Q$ (called a ``traffic plan'') on the space of absolutely continuous curves in $\Omega$ parametrized on $[0,1]$, a space we denote by $\mathcal{C}$. It is helpful to introduce for each $t \in [0,1]$ the ``evaluation at time $t$'' map $e_t: \cC \rightarrow \Omega$, which sends a curve $\omega \in \cC$ to its position at time $t$, $\omega(t)$. The curve of measures $(e_t)_\# Q$ is then a flow of mass in time, and compatibility of $Q$ with the source and target distributions $\mu$ and $\nu$ is enforced by requiring
\begin{equation}
(e_0)_\# Q = \mu, (e_1)_\# Q = \nu, \label{eq:compatibilityofQwithmunu}
\end{equation}
which means that the flow $(e_t)_\#Q$ starts at $\mu$ and ends at $\nu$. The set of traffic plans satisfying \eqref{eq:compatibilityofQwithmunu} will be denoted by $\mathcal{Q}(\mu,\nu)$. 

A result in \cite{carlier2008optimal} helps to gain intuition on such traffic plans. Define $\mathcal{C}^{x,y}$ as the subset of $\cC$ consisting of curves that start at $x$ and end at $y$. It is shown in \cite{carlier2008optimal} that for any $Q \in \cQ(\mu,\nu)$ there is a transport plan $\gamma$ admissible in \eqref{prob:standardOTproblem} and for each $x, y \in \Omega$ a distribution $Q^{x,y}$ on $\cC^{x,y}$ such that $Q$ decomposes as $dQ = dQ^{x,y} d\gamma$. More precisely, for any continuous test function $\phi$ mapping $\cC$ to $\RR$,
\begin{equation*}
\int_\cC \phi(\omega) dQ = \int_{\Omega \times \Omega} \int_{\cC^{x,y}} \phi(\omega) dQ^{x,y} d\gamma.
\end{equation*}
In this way, traffic plans $Q$ not only select the initial and final positions of mass with $\gamma$, but also the paths taken between $x$ and $y$ with $Q^{x,y}$. 

To each traffic plan $Q$ one associates a scalar measure $i_Q$, called the traffic intensity, and a vector measure $\ww_Q$, called the traffic flow. For $\phi\in C(\Omega)$ and $\xi \in C(\Omega;\RR^d)$ scalar and vector test functions, respectively, these measures are defined by the equalities
\begin{align}
\int_\Omega \phi di_Q &= \int_\cC \int_0^1 \phi(\omega(t)) |\omega'(t)| dt dQ,\label{def:traffic_intensity}\\
\int_\Omega \xi \cdot d\ww_Q &= \int_\cC \int_0^1 \xi(\omega(t)) \cdot \omega'(t) dt dQ. \label{def:traffic_flow}
\end{align}
Heuristically, for a measurable set $E \subset \Omega$, $i_Q(E)$ represents the total mass passing through $E$ according to $Q$, weighted by the length of each curve in $E$; in this sense $i_Q$ is a measure of congestion. The value $\ww_Q(E)$ also has a physical interpretation. After dividing by the total elapsed time, we can think of it as the time averaged momentum (since it has units of velocity multiplied by mass) of curves, according to $Q$, which pass through $E$. 

We can now define the important notion of the cost of congested transport. To each traffic plan $Q$ that has a traffic intensity $i_Q$ with a density with respect to Lebesgue measure (which we will denote by $i_Q(x)$), one associates a cost given by the formula,
\begin{equation*}
\int_\Omega H(x, i_Q(x)) dx,
\end{equation*}
where $H:\Omega\times \RR \rightarrow \RR$ is the cost function. In general, $H$ has a non-trivial dependency on the spatial variable $x$ (and in our case this is important) but the example often studied (e.g. in \cite{brasco2010congested}) is
\begin{equation}
H(x, z) = H(z) = \frac{1}{2\lambda}z^2 + |z|. \label{eq:exampleH}
\end{equation}
With this choice of cost, the incremental cost at congestion level $i_Q(x)$ is $H'(i_Q(x)) = \frac{1}{\lambda}i_Q(x) + 1$. As such, when congestion (i.e. $i_Q(x)$) is large, the incremental cost of adding more mass at $x$ is very high. Conversely, when there is no traffic (i.e. $i_Q(x) = 0$), there is still a non-zero incremental cost; this is often phrased as ``cars cannot travel at infinite speeds on empty roads''. 

The standard congested transport problem is then to minimize the cost given by $H$ among all traffic plans $Q\in \cQ(\mu,\nu)$ with $i_Q$ absolutely continuous with respect to Lebesgue measure; that is
\begin{equation*}
\inf \{ \int_\Omega H(x, i_Q(x)) dx \mid Q \in \cQ(\mu,\nu), i_Q \ll \mathcal{L}_d\}.
\end{equation*}
See \cite{carlier2008optimal} for proofs of existence of solutions to this problem, and \cite{brasco2010congested} for the relationship between solutions and minimal flows.
\section{Main results}
\label{sec:mainresults}
\subsection{Definition of the optimization problems}
Here we will define more precisely the optimization problems involved in WGAN-GP. Throughout we will assume that $\mu$ and $\nu$ have densities $f(x)$ and $g(x)$  with respect to Lebesgue measure; we will see later in Section \ref{sec:propertiesofsigma} that this implies $\sigma$ also has a density, which we will denote by $\sigma(x)$. 

We need to specify the space over which we are maximizing in \eqref{eq:onesidedwgan-gpproblem}. The simplest choice is $H^1(\Omega)$, the Sobolev space of functions $u:\Omega \rightarrow \RR$ with $u$ and its weak derivative $\nabla u$ satisfying
\begin{equation*}
\int_\Omega (u^2(x) + |\nabla u|^2(x)) dx < \infty.
\end{equation*} 
With this set of admissible functions, the problem, which we denote as \eqref{prob:GPlambda}, is as follows:
\begin{equation}
\sup\{ \langle u,f-g \rangle - \frac{\lambda}{2}\int_\Omega(|\nabla u| - 1)_+^2 \sigma(x) dx \mid u \in H^1(\Omega)\}\tag{$GP_\lambda$},\label{prob:GPlambda}
\end{equation}
where $\langle u, f-g \rangle$ denotes the $L^2(\Omega)$ inner product. The functional in \eqref{prob:GPlambda} is exactly that of \eqref{eq:onesidedwgan-gpproblem} up to a rescaling of $\lambda$. We emphasize that in \eqref{prob:GPlambda}, as in later problems, we work specifically with the measure $\sigma$ that corresponds precisely to the sampling scheme in \cite{gulrajani2017improved}. The value of the supremum will be denoted as $\sup\eqref{prob:GPlambda}$, and throughout this paper we will use max (or min, as appropriate) rather than sup (or inf) when the optimization problem has a solution. We use the space $H^1(\Omega)$ as it is the simplest space over which we can guarantee that the functional is finite if, say, $f, g \in L^2(\Omega)$ and $\sigma \in L^\infty(\Omega)$. 

A slightly more complicated but also more natural space, in view of the penalty term, is the weighted Sobolev space $H^1(\Omega,\sigma)$. This is the space of functions $u$ with weak derivatives $\nabla u$ having finite norm according to the weight $\sigma$, i.e.
\begin{equation*}
\normusigma := \left(\int_\Omega (u^2(x) + |\nabla u|^2(x))\sigma(x) dx\right)^{1/2} < \infty.
\end{equation*}
As we will see later we have sharper results for the case of optimizing \eqref{eq:onesidedwgan-gpproblem} over this space. \textit{A priori} it is not clear that the measure $\sigma$ from \cite{gulrajani2017improved} is non-degenerate almost everywhere in $\Omega$ and so it may not serve as a reasonable weight, but in Section \ref{sec:propertiesofsigma} we will provide conditions on $f$ and $g$ which guarantee this. Proceeding under the assumption that $\sigma$ is a reasonable weight, we write our second problem, denoted \eqref{prob:tildeGPlambda} as
\begin{equation}
\sup\{ \langle u, f-g \rangle - \frac{\lambda}{2}\int_\Omega(|\nabla u| - 1)_+^2 \sigma(x) dx \mid u \in H^1(\Omega, \sigma)\}.\tag{$\widetilde{GP}_\lambda$}\label{prob:tildeGPlambda}
\end{equation}
We will show in Section \ref{sec:proofofstrongtheorem} that $\langle u, f-g\rangle$ has meaning for $u \in H^1(\Omega,\sigma)$, even though $u$ may not be in $L^2(\Omega)$. This will rely on the dependence of the sampling measure $\sigma$ from \cite{gulrajani2017improved} on the given distributions $\mu$ and $\nu$. Aside from this interesting feature of the problem, the only difference between \eqref{prob:GPlambda} and \eqref{prob:tildeGPlambda} is the space over which we optimize; $H^1(\Omega, \sigma)$ is a natural choice given the structure of the gradient penalty term. However, for $\sigma$ to be a reasonable weight we require some additional assumptions on $f$ and $g$, so we include results for \eqref{prob:GPlambda}, valid even when these assumptions fail. We will denote the value of $\eqref{prob:tildeGPlambda}$ as $\sup\eqref{prob:tildeGPlambda}$. For background on weighted Sobolev spaces, see for instance \cite{kufner1985weighted} or \cite{kufner1984define}.

We are now ready to state our congested transport problem, denoted \eqref{prob:CPlambda}:
\begin{equation}
\inf \{ \int_\Omega H(x, i_Q(x)) dx \mid Q \in \mathcal{Q}(\mu,\nu), i_Q \ll \mathcal{L}_d, i_Q \in L^2(\Omega)\},\tag{$CP_\lambda$}\label{prob:CPlambda}
\end{equation}
where $H: \Omega \times \RR \rightarrow \RR$ is given by
\begin{equation}
H(x, z) = \begin{cases}
\frac{1}{2\lambda \sigma(x)}z^2 + |z| &\quad \sigma(x) >0,\\
0 &\quad \sigma(x) = 0, z = 0,\\
+\infty &\quad \sigma(x) = 0, z \neq 0.
\end{cases}\label{eq:Hdeff}
\end{equation}
Remarkably, the cost $H$ that appears in our congested transport problem is very close to the one given in \eqref{eq:exampleH}, which is the prototypical example in the congested transport literature. The only difference is that our $H$ contains a spatially varying component depending on $\sigma$, which in turn depends on $\mu$ and $\nu$. One can think of $\sigma$ as a local speed limit, in the sense that where $\sigma$ is large the cost for a certain amount of mass flow is less, and vice versa. 
\subsection{Statement of the main results}
Throughout we will assume that $\Omega \subset \RR^d$ is an open, bounded, convex set with a Lipschitz boundary. The following theorem is a formal statement of our contributions \ref{informalclaim:valiscongestedtransport}, \ref{informalclaim:CTcostisnotW1cost}, and \ref{informalclaim:approximategradudeterminesavgmomentum} as listed in Section \ref{sec:introduction}.
\begin{customthm}{\textbf{A}}
\label{thm:weaktheorem}
Suppose that $f$ and  $g$ are probability density functions in $L^\infty(\Omega)$. Then
\begin{enumerate}
\item $\sup \eqref{prob:GPlambda} = \inf \eqref{prob:CPlambda}< +\infty$ (i.e. WGAN-GP compute a congested transport cost). \label{claim:valiscongestedtransport}
\item There exists a $C>0$, such that for all $\lambda >0$,
\label{claim:strictlymorethanW1}
\begin{equation*}
\sup \eqref{prob:GPlambda} \geq W_1(\mu,\nu)\left(1 + \frac{C}{\lambda}W_1(\mu,\nu)\right).
\end{equation*}
In particular, $\sup \eqref{prob:GPlambda} > W_1(\mu,\nu)$ for all $\lambda>0$ whenever $W_1(\mu,\nu)$ is non-zero.
\item There is a traffic plan $Q_0$ that solves \eqref{prob:CPlambda}. Moreover, any two solutions $Q_0, Q_1$ have the same traffic flow and traffic intensity, which are related by the equations \label{claim:existenceofQsolution}
\begin{equation*}
\ww_{Q_0} = \ww_{Q_1}, i_{Q_0} = |\ww_{Q_0}| = |\ww_{Q_1}| = i_{Q_1},
\end{equation*}
where $|\ww_{Q_0}|$ and $|\ww_{Q_1}|$ are the total variation measures of the vector measures $\ww_{Q_0}$ and $\ww_{Q_1}$.
\item If $Q_0$ is a solution to \eqref{prob:CPlambda} and if $u_0\in H^1(\Omega)$ is an approximate solution to \eqref{prob:GPlambda} in the sense that for some $\epsilon>0$,
\begin{equation*}
\langle u_0, f-g \rangle - \frac{\lambda}{2}\int_\Omega (|\nabla u_0| -1)_+^2 \sigma dx \geq  \sup \eqref{prob:GPlambda} - \epsilon,
\end{equation*}
then the vector density $\ww_{Q_0}$ for the traffic flow satisfies, for some constant $C$,\label{claim:approximateextremality}
\begin{equation}
\norm{\ww_{Q_0} + \lambda \sigma (|\nabla u_0|-1)_+ \frac{\nabla u_0}{|\nabla u_0|}}^2_{L^2(\Omega;\RR^d)} \leq C \epsilon.\label{eq:approximateextremality}
\end{equation}
\end{enumerate}
\end{customthm}

With some simple additional assumptions on $f$ and $g$, we are able to obtain the existence of a solution $u_0 \in H^1(\Omega,\sigma)$ to \eqref{prob:tildeGPlambda} and make \eqref{eq:approximateextremality} an identity. Theorem \ref{thm:strongtheorem} is a formal statement of these results, and addresses contributions \ref{informalclaim:existenceofsolutions} and \ref{informalclaim:gradudeterminesavgmomentum} from Section \ref{sec:introduction}. 
\begin{customthm}{\textbf{B}}
\label{thm:strongtheorem}
Assume that $f$ and $g$ are probability density functions in $L^\infty(\Omega)$, that $g = 0$ in a neighbourhood of the boundary of $\Omega$, and  that $\inf_{x \in \Omega} f(x) >0$. Then
\begin{enumerate}
\item All of the claims of Theorem \ref{thm:weaktheorem} hold,\label{claim:allofAholds}
\item \eqref{prob:tildeGPlambda} has a solution, \label{claim:tildegplambdahassolution}
\item $\sup \eqref{prob:GPlambda} = \max  \eqref{prob:tildeGPlambda}$, and\label{claim:equivalenceofsupandmax}
\item If $u_0$ solves \eqref{prob:tildeGPlambda} and $Q_0$ solves \eqref{prob:CPlambda}, then \label{claim:strongextremality}
\begin{align}
\mathbf{w}_{Q_0} = -\lambda \sigma (|\nabla u_0| -1)_+ \frac{\nabla u_0}{|\nabla u_0|}, \quad \nabla u_0(x) = -\left(\frac{1}{\lambda \sigma(x)} + \frac{1}{|\ww_{Q_0}(x)|}\right)\ww_{Q_0}(x),\label{eq:strongextremality}
\end{align}
the latter formula holding for almost all $x$ such that $\ww_{Q_0}(x) \neq 0$.
\end{enumerate}
\end{customthm}
\section{Related Work}
Since our work connects two previously disconnected fields (i.e. congested transport and generative modelling with WGANs), we will briefly review related works from both areas here.

To our knowledge, ours are the first results in the WGAN literature determining precisely the optimal value of the objective function from \cite{gulrajani2017improved} with one-sided penalty. More generally, several papers (e.g. \cite{mallasto2019well, pinetz2019estimation, stanczuk2021wasserstein}) have provided empirical evidence that WGAN-GP do not compute $W_1(\mu,\nu)$, but do not offer a precise notion of what they do compute. In particular, \cite{mallasto2019well} shows that for simple discrete problems WGAN-GP tend to over-estimate the Wasserstein $1$ distance. A similar observation is noted in \cite{pinetz2019estimation}, who also provide evidence that the optimal value computed by WGAN-GP tends to converge to $W_1(\mu,\nu)$ only as $\lambda \rightarrow \infty$. Statement \ref{claim:strictlymorethanW1} of Theorem \ref{thm:weaktheorem} provides a theoretical explanation for both of these observations.  

The subject of one versus two-sided penalties for WGAN-GP is studied in depth in \cite{petzka2018regularization}, which shows empirically that the former results in more stable training of WGANs with less dependence on the $\lambda$ parameter. Despite this, it seems that the two-sided penalty is the default for many practitioners (e.g. \cite{kurach2019large}). Our results only apply directly to the one-sided penalty, but they are useful for bounding the optimal value of the two-sided penalty using the bounds provided in \cite{petzka2018regularization}; a more precise analysis for the two-sided penalty may be obtainable through a convexification argument.

An interesting recent work is \cite{biau2021some}, which deals with some of the issues that arise when approximating a Kantorovich potential with neural networks, though does not consider the functional used in WGAN-GP. Since our work leaves such approximation issues unexamined, we view \cite{biau2021some} as an important and complementary perspective.

On the congested transport side, we note that our proof of the equivalence of $\sup\eqref{prob:GPlambda}$ and $\inf\eqref{prob:CPlambda}$ is inspired by \cite{brasco2010congested} and refinements in \cite{brasco2014continuous}. Indeed, arguments in \cite{brasco2010congested} show this equivalence for the case of $\sigma(x) =1$ almost everywhere. However, we were unable to find the results we needed for the $\sigma$ from \cite{gulrajani2017improved} in the literature; \cite{brasco2010congested} works with the case of $\sigma = 1$, and \cite{brasco2014continuous} includes generalizations of these results to the case of inhomogeneous (i.e. $x$ dependent) cost $H(x,z)$. However, they assume $H$ is bounded above for fixed $z$, which is not the case for WGAN-GP (see \eqref{eq:Hdeff}). Thus one can view our results as extending some of the work in both \cite{brasco2010congested} and \cite{brasco2014continuous} to a new class of problems where the cost function $H(x,z)$ appearing in the congested transport problem is unbounded in $x$; we obtain this extension by exploiting the dependence of $\sigma$ on the distributions $\mu$ and $\nu$.

Let us also note that there are variants of congested transport which are not isotropic, in the sense that the cost function depends on the direction of traffic flow (e.g. \cite{brasco2013congested}). We speculate that this might lead to interesting generalizations of WGAN-GP, but leave this for future work.

\label{sec:relatedwork}

\section{Proof of the main theorems}
\label{sec:proofofmaintheorem}
In this section we will sketch the proofs of Theorems \ref{thm:weaktheorem} and \ref{thm:strongtheorem}; detailed proofs are provided in Appendix \ref{sec:appendix}. The proof of Theorem \ref{thm:weaktheorem} is analogous to the approach from congested transport theory (i.e. \cite{brasco2014continuous} and \cite{brasco2010congested}). We consider a dual problem given by the following Beckmann type problem:

\begin{equation}
\inf\{\int_\Omega H(x, |\ww|(x)) dx \mid \mathbf{w} \in \bigLnosigma, \nabla \cdot \mathbf{w} = f-g \},\tag{$BP_\lambda$}
\label{prob:BPlambda}
\end{equation}
where $H$ is given in \eqref{eq:Hdeff}, and as is usual for Beckmann problems  the equation $\nabla \cdot \ww = f-g$ holds in the weak sense. In other words, $\ww$ is admissible in \eqref{prob:BPlambda} only if, for all $u \in H^1(\Omega)$, 
\begin{equation*}
    \int_{\Omega} -\nabla u \cdot \ww dx = \int_{\Omega} u (f-g ) dx.
\end{equation*}
Note that since we do not impose conditions on the behaviour of $u$ at the boundary of $\Omega$, this imposes a no-flux condition on $\ww$. We then prove that $\sup\eqref{prob:GPlambda}$ equals $\inf \eqref{prob:BPlambda}$, and that the latter is equal to $\inf\eqref{prob:CPlambda}$. The relationship \eqref{eq:approximateextremality} between approximate solutions of \eqref{prob:GPlambda} and \eqref{prob:CPlambda} is then established using properties of the Legendre dual of the function $H(x,z)$ defined in \eqref{eq:Hdeff}.

To prove Theorem \ref{thm:strongtheorem}, we show in Proposition \ref{prop:comparabletodistance3} that $\sigma(x)$ is comparable to the distance to the boundary function
\begin{equation*}
\dist(x,\partial \Omega) := \inf_{y \in \partial \Omega} |x-y|.
\end{equation*} 
This establishes some properties of $H^1(\Omega,\sigma)$ which we need for our proof, such as a Poincar{\'e} inequality and density of smooth functions up to the boundary. The former is used to establish the existence of a minimizer to \eqref{prob:tildeGPlambda}, and the latter is used to establish $\sup \eqref{prob:GPlambda} = \max \eqref{prob:tildeGPlambda}$. Finally, the equalities in \eqref{eq:strongextremality} are obtained, essentially, by evaluating \eqref{eq:approximateextremality} at a solution to \eqref{prob:tildeGPlambda}.

\subsection{Properties of $\sigma$}
\label{sec:propertiesofsigma}
We begin by collecting several properties of the measure $\sigma$ which will be needed in the proofs of Theorems \ref{thm:weaktheorem} and \ref{thm:strongtheorem}. If $x$, $y$, and $t$ are sampled independently (as is done in \cite{gulrajani2017improved}), $\sigma$ is the probability distribution defined by the formula
\begin{equation*}
\sigma = \pi_\# (U[0,1] \otimes \mu \otimes \nu),
\end{equation*}
where $U[0,1]$ is the uniform measure on $[0,1]$, $U[0,1] \otimes \mu \otimes \nu$ is the product measure, and $\pi: [0,1] \times \Omega \times \Omega \rightarrow \Omega$ is the map
\begin{equation*}
\pi(t, x, y) = (1-t) x + ty.
\end{equation*}
The following lemma guarantees that if $\mu$ and $\nu$ have densities with respect to Lebesgue measure on $\RR^d$ then $\sigma$ also has a density. We also provide a formula for this density. 
\begin{lemma}
\label{lem:sigmaACandformula}
If $\mu = f(x) dx$ and $\nu = g(x) dx$ then $\sigma $ has a density given by the formula
\begin{equation}
\sigma(z) = \int_0^1 \int_\Omega f\left(\frac{z-ty}{1-t}\right)g(y) (1-t)^{-d} dy dt.\label{eq:sigmaformula}
\end{equation}
\end{lemma}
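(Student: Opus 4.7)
The plan is to unwind the definition $\sigma=\pi_\#(U[0,1]\otimes \mu\otimes \nu)$ by testing against an arbitrary bounded continuous $\phi$ on $\Omega$, perform a change of variables in the innermost integral, and then apply Tonelli's theorem to identify the density of $\sigma$ with respect to Lebesgue measure.

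Concretely, by the pushforward formula,
\begin{equation*}
\int_\Omega \phi(z)\, d\sigma(z)
=\int_0^1\!\int_\Omega\!\int_\Omega \phi\bigl((1-t)x+ty\bigr)\,f(x)g(y)\,dx\,dy\,dt.
\end{equation*}
First I would fix $t\in[0,1)$ and $y\in\Omega$ and substitute $z=(1-t)x+ty$ in the $x$-integral. This is an affine, invertible change of variables with Jacobian $(1-t)^{d}$, giving $x=(z-ty)/(1-t)$ and $dx=(1-t)^{-d}\,dz$. Since $\Omega$ is convex and $y\in\Omega$, the image of $\Omega$ under $x\mapsto (1-t)x+ty$ is contained in $\Omega$, so after extending $f$ by zero outside $\Omega$ the inner integral becomes
\begin{equation*}
\int_\Omega \phi(z)\,f\!\left(\frac{z-ty}{1-t}\right)(1-t)^{-d}\,dz .
\end{equation*}

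Next I would interchange the order of integration. Because $\phi$, $f$, $g$ are all non-negative (after replacing $\phi$ by $|\phi|$ for the integrability check), Tonelli's theorem applies directly and produces
\begin{equation*}
\int_\Omega \phi(z)\,d\sigma(z)
=\int_\Omega \phi(z)\left[\int_0^1\!\int_\Omega f\!\left(\tfrac{z-ty}{1-t}\right)g(y)(1-t)^{-d}\,dy\,dt\right]dz .
\end{equation*}
Since $\phi$ was arbitrary, this identifies the density of $\sigma$ with the bracketed expression, which is exactly \eqref{eq:sigmaformula}. Integrability of the density (and hence the validity of Tonelli above) follows by taking $\phi\equiv 1$: the double integral must equal $\sigma(\Omega)=1$, which is finite.

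The one delicate point, and the main thing to be careful about, is the apparent singularity $(1-t)^{-d}$ as $t\uparrow 1$. This is harmless because $\{t=1\}$ has measure zero in $[0,1]$ and the change of variables is performed only for $t\in[0,1)$; the blow-up of the Jacobian factor is compensated by the fact that the image domain $(1-t)\Omega+ty$ shrinks to the point $y$, and Tonelli lets us avoid any quantitative estimate since the whole integrand is non-negative. Thus the formula holds with no additional regularity hypotheses on $f$ and $g$ beyond their being densities.
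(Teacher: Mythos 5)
Your proposal is correct and follows essentially the same route as the paper: unwind the pushforward against test functions, change variables $z=(1-t)x+ty$ in the $x$-integral for $t\in[0,1)$ with Jacobian factor $(1-t)^{-d}$, and apply Fubini--Tonelli to read off the density. The paper tests against indicator functions $1_E$ rather than continuous $\phi$, which is an immaterial difference.
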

The proof is a straightforward calculation; see Appendix \ref{sec:proofsofsigmaproperties} for details.

We next establish that $\sigma$ is bounded when $f$ and $g$ are bounded. In particular this shows that the functional in \eqref{prob:GPlambda} is finite over $H^1(\Omega)$ for bounded $f$ and $g$. See Appendix \ref{sec:proofsofsigmaproperties} for a full proof.
\begin{lemma}
If $f$ and $g$ are probability density functions in $L^\infty(\Omega)$, then so is $\sigma$. 
\label{lem:sigmaLinfinity}
\end{lemma}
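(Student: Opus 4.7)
The plan is to bound $\sigma(z)$ pointwise by combining the formula from Lemma \ref{lem:sigmaACandformula} with a symmetric companion formula obtained by swapping the roles of $\mu$ and $\nu$. The difficulty is that the explicit formula
\[
\sigma(z) = \int_0^1 \int_\Omega f\!\left(\tfrac{z-ty}{1-t}\right) g(y)\,(1-t)^{-d}\,dy\,dt
\]
has a factor $(1-t)^{-d}$ that blows up as $t\to 1$, so a naive bound $f \le \|f\|_\infty$ does not close. This is the main obstacle; everything else is routine. The fix is to use the invariance of the construction of $\sigma$ under the transformation $(t,x,y)\mapsto (1-t,y,x)$.

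First I would derive a second representation for $\sigma$ either directly from $\sigma = \pi_\#(U[0,1]\otimes \mu\otimes \nu)$ by performing the change of variables $z=(1-t)x+ty$ with $t$ and $x$ fixed (so $y=(z-(1-t)x)/t$ and $dy = t^{-d}dz$), or by first noting the symmetry $\sigma = \tilde\pi_\#(U[0,1]\otimes \nu\otimes \mu)$ where $\tilde\pi(s,y,x)=(1-s)y+sx$ and then applying Lemma \ref{lem:sigmaACandformula} to this representation. Either way I obtain
\[
\sigma(z) = \int_0^1 \int_\Omega f(x)\, g\!\left(\tfrac{z-(1-t)x}{t}\right) t^{-d}\,dx\,dt,
\]
which is singular at $t=0$ rather than $t=1$. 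I should also observe that convexity of $\Omega$ ensures that the integrands above are supported where the arguments of $f$ and $g$ lie in $\Omega$, so we can safely bound them by the essential suprema of $f$ and $g$.

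Next I would split the $t$-integral in the original formula at $t=1/2$. On $[0,1/2]$ the weight $(1-t)^{-d}$ is bounded by $2^d$, so by pulling out $\|f\|_\infty$ and integrating $g$ in $y$ and $t$ I get a bound of $2^{d-1}\|f\|_{L^\infty(\Omega)}$. Symmetrically, splitting the second formula and using the bound $t^{-d}\le 2^d$ on $[1/2,1]$, I pull out $\|g\|_\infty$ and integrate $f$ to get $2^{d-1}\|g\|_{L^\infty(\Omega)}$ there. Adding the two halves (using the original formula on $[0,1/2]$ and the companion formula on $[1/2,1]$) yields the pointwise estimate
\[
\sigma(z) \;\le\; 2^{d-1}\bigl(\|f\|_{L^\infty(\Omega)} + \|g\|_{L^\infty(\Omega)}\bigr),
\]
valid for almost every $z\in\Omega$, which proves $\sigma\in L^\infty(\Omega)$. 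The only slightly subtle point to record carefully in the write-up is the measurability/Fubini justification for splitting and switching the order of integration, but these are standard since the integrand is non-negative.
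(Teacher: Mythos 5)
Your proof is correct and follows essentially the same route as the paper: split the $t$-integral at $t=1/2$, keep the original formula where $(1-t)^{-d}\le 2^d$, and apply the change of variables $y' = \frac{z-ty}{1-t}$ (equivalently, your symmetric companion formula) on $[1/2,1]$ so that the singular weight becomes $t^{-d}\le 2^d$ and $\|g\|_{L^\infty(\Omega)}$ can be pulled out instead. The paper's proof is exactly this computation, so no further comparison is needed.
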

Finally we establish that under the assumptions of Theorem \ref{thm:strongtheorem}, $\sigma$ is comparable to $\dist(x, \partial \Omega)$.
\begin{proposition}
\label{prop:comparabletodistance3}
Assume that $f$ and $g$ are probability density functions in $L^\infty(\Omega)$, that $g = 0$ in a neighbourhood of the boundary of $\Omega$, and  that $\inf_{x \in \Omega} f(x) >0$. Then there exists a constant $C$ such that for all $x \in \Omega$,
\begin{equation*}
\frac{1}{C}\dist(x, \partial \Omega) \leq \sigma(x) \leq C \dist(x, \partial \Omega).
\end{equation*}
\end{proposition}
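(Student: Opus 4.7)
The plan is to analyze the explicit formula \eqref{eq:sigmaformula} for $\sigma$ via the convex geometry of $\Omega$, using the split $\sigma(z) = I_1(z) + I_2(z)$ from the sketch of Lemma \ref{lem:sigmaLinfinity} so that the kernels $(1-t)^{-d}$ and $t^{-d}$ remain bounded on their respective halves of $[0,1]$. Set $K := \spt(g)$, which by hypothesis is compactly contained in $\Omega$, and write $\delta := \dist(K, \partial \Omega) > 0$ and $c_f := \inf_\Omega f > 0$.

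The main geometric input is a lemma of the following form: for $z \in \Omega$ and $y \in K$, let $d(z, y)$ denote the distance from $z$ to $\partial \Omega$ along the ray $\{z + \alpha(z-y) : \alpha \geq 0\}$, with boundary exit point $q$. Then
\[
\dist(z, \partial \Omega) \;\le\; d(z, y) \;\le\; \frac{\diam(\Omega)}{\delta}\,\dist(z, \partial \Omega).
\]
The lower bound is immediate from the definition of $\dist(z, \partial \Omega)$. For the upper bound, $B(y, \delta) \subset \Omega$ together with convexity implies the convex hull of $B(y, \delta) \cup \{q\}$ lies in $\overline{\Omega}$; since $z$ is on the segment $[y, q]$ at distance $d(z, y)$ from the apex $q$ and $|y - q| \le \diam(\Omega)$, a similar-triangles computation shows this convex hull contains the ball of radius $d(z, y)\delta/|y-q|$ around $z$, giving $\dist(z, \partial \Omega) \ge d(z, y)\delta/\diam(\Omega)$.

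With this in hand, for the upper bound on $\sigma$ I would bound $I_1(z)$ using $f \le \|f\|_\infty$: the condition $(z-ty)/(1-t) \in \Omega$ restricts $t$ to $[0, T(z, y)]$ with $T(z, y) \le d(z,y)/|z-y|$, and when $\dist(z, \partial \Omega) \le \delta/2$ one has $|z - y| \ge \delta/2$ for $y \in K$, so the geometric lemma yields $T(z, y) \le C \dist(z, \partial \Omega)$; combining with $(1-t)^{-d} \le 2^d$ on $[0,1/2]$ and $\int g = 1$ gives $I_1(z) \le C \dist(z, \partial \Omega)$. For $I_2(z)$ the integrand vanishes unless $\xi := (z - (1-t) y')/t \in K$ and $y' \in \Omega$; writing $y' = z + (t/(1-t))(z - \xi)$ and using $t \ge 1/2$, the requirement $y' \in \Omega$ forces $d(z, \xi) \ge |z - \xi| \ge \delta/2$, which by the geometric lemma forces $\dist(z, \partial \Omega) \ge \delta^2/(2 \diam(\Omega))$. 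Hence $I_2$ vanishes identically near $\partial \Omega$, while away from the boundary it is trivially controlled by the $L^\infty$ bound from Lemma \ref{lem:sigmaLinfinity}. For the lower bound I would use only $I_1$ together with $f \ge c_f$: the estimates $d(z, y) \ge \dist(z, \partial \Omega)$ and $|z-y| + d(z, y) \le 2 \diam(\Omega)$ give $T(z, y) \ge \dist(z, \partial \Omega)/(2 \diam(\Omega))$, and with $(1-t)^{-d} \ge 1$ and $\int g = 1$ this produces $\sigma(z) \ge I_1(z) \ge c \, \dist(z, \partial \Omega)$.

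The main obstacle is the geometric lemma, which converts the anisotropic chord-exit distance $d(z, y)$ into an estimate on the isotropic boundary distance via the interior ball at $y$ and the convexity of $\Omega$. Once it is in place, everything else is a careful but routine use of the explicit formula \eqref{eq:sigmaformula} and the split from the proof sketch of Lemma \ref{lem:sigmaLinfinity}.
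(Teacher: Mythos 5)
Your proof is correct, and for the upper bound it takes a genuinely different route from the paper's. The lower bound is essentially the paper's Lemma \ref{lem:distancelowerbound}: restrict the time integral to an interval of length $\sim \dist(x,\partial\Omega)/\diam(\Omega)$ on which $(x-ty)/(1-t)$ stays in $\Omega$, so that $f \geq c_f$ applies. For the upper bound, however, the paper does not split $\sigma$ into $I_1 + I_2$; it takes the nearest boundary point $y$ to $x$, proves that $y-x$ is a supporting vector for $\Omega$ at $y$ (Lemma \ref{lem:y-xissptaty}), and slices $\Omega$ by the hyperplane $\langle y-x, \cdot\rangle = \theta - \delta_0\dist(x,\partial\Omega)$: on one side $g$ vanishes, and on the other side $(x-tz)/(1-t)$ exits $\Omega$ once $t > \dist(x,\partial\Omega)/\delta_0$, so the effective time interval has length $O(\dist(x,\partial\Omega))$ and sits inside $[0,1/2]$, which simultaneously controls $(1-t)^{-d}$. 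Your interior-ball/cone lemma --- that the exit distance $d(z,y)$ along the ray from $y \in \spt(g)$ through $z$ satisfies $\dist(z,\partial\Omega) \leq d(z,y) \leq (\diam(\Omega)/\delta)\dist(z,\partial\Omega)$ --- replaces the supporting-hyperplane computation; I checked the similar-triangles step ($\mathrm{conv}(B(y,\delta)\cup\{q\})$ contains $B(z, \delta\,d(z,y)/|y-q|)$) and the reduction of the $t$-range in $I_1$ to $[0, d(z,y)/|z-y|]$, and both are sound, as is the observation that $I_2 \equiv 0$ once $\dist(z,\partial\Omega) < \delta^2/(2\diam(\Omega))$ (which the split forces you to verify, and which the paper's unsplit argument avoids). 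Both proofs use convexity and the standoff of $\spt(g)$ from $\partial\Omega$ in an essential way and give constants of the same quality; yours has the advantage that a single geometric lemma furnishes both the upper and lower bounds, while the paper's hyperplane argument avoids the extra case analysis for the $t \geq 1/2$ contribution.
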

Proposition \ref{prop:comparabletodistance3} is proved in Appendix \ref{sec:proofsofsigmaproperties} in Lemmas \ref{lem:distancelowerbound}, \ref{lem:y-xissptaty} and \ref{lem:distupperbound}. The lower bound on $f$ is sufficient for the lower bound on $\sigma$, while convexity of $\Omega$ and having $g$ vanish in a  neighbourhood of $\partial \Omega$ is sufficient for the upper bound. Note that in this case $\sigma$ vanishes on $\partial \Omega$, and hence the cost $H(x,z)$ is unbounded as $x$ approaches the boundary.

\subsection{Proof of Theorem \ref{thm:weaktheorem}}
\label{sec:proofofweaktheorem}
For convenience, we denote the functional arising in \eqref{prob:GPlambda} as
 $J :H^1(\Omega) \times \bigLnosigma \rightarrow \RR$,
\begin{align}
J(u, \pp) = \langle u, g-f \rangle + \frac{\lambda}{2}\int_\Omega (|\pp|-1)_+^2 \sigma(x) dx. \label{eq:Jdeff}
\end{align}
We note that since $\sigma\in L^\infty(\Omega)$ (see Lemma \ref{lem:sigmaLinfinity}), $J$ does indeed take finite values.  The problem \eqref{prob:GPlambda} can then be written as
\begin{equation*}
-\inf_{u \in H^1(\Omega)} J(u,\nabla u).
\end{equation*}
The proof of Theorem \ref{thm:weaktheorem} rests on the following inequality, which takes into account the dependence of $\sigma$ on the given probability density functions $f$ and $g$. A full proof is given in Appendix \ref{sec:proofsofresultsforweaktheorem}.
\begin{lemma}
\label{lem:supGPlambdaisfinite}
If $f$ and $g$ are probability density functions in $L^\infty(\Omega)$, and $\sigma$ is the density given in \eqref{eq:sigmaformula}, then for all $u \in H^1(\Omega)$ we have
\begin{equation}
|\langle u, g-f \rangle | \leq \diam(\Omega) \left(\int_\Omega |\nabla u|^2 \sigma(x) dx \right)^{1/2},\label{eq:boundingugminusf}
\end{equation}
where $\diam(\Omega) = \sup\{ |x-y| \mid x, y \in \Omega\}$. Further, $\sup \eqref{prob:GPlambda}$ is finite.
\end{lemma}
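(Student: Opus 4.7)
The plan is to derive (i) by a fundamental-theorem-of-calculus argument along the straight line segment from $x$ to $y$, which pairs perfectly with the sampling scheme defining $\sigma$. First, for $u \in C^\infty(\overline{\Omega})$ and $\Omega$ convex, I would rewrite
\begin{equation*}
\langle u, g - f\rangle = \int_\Omega\int_\Omega (u(y) - u(x))\, f(x) g(y)\, dx\, dy,
\end{equation*}
using $\int f = \int g = 1$. Applying the identity
\begin{equation*}
u(y) - u(x) = \int_0^1 \nabla u((1-t)x + ty)\cdot(y - x)\, dt,
\end{equation*}
bounding $|y - x| \le \diam(\Omega)$ inside the triple integral, and recognizing the resulting expression as
\begin{equation*}
\int_0^1\int_\Omega\int_\Omega |\nabla u((1-t)x+ty)|\, f(x) g(y)\, dx\, dy\, dt = \int_\Omega |\nabla u(z)|\, \sigma(z)\, dz
\end{equation*}
by the push-forward definition of $\sigma$ (Lemma \ref{lem:sigmaACandformula}), one obtains $|\langle u, g - f\rangle| \le \diam(\Omega)\int_\Omega |\nabla u|\, \sigma\, dz$. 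A single Cauchy--Schwarz, together with $\int_\Omega \sigma\, dz = 1$, then delivers \eqref{eq:boundingugminusf} for smooth $u$.

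To pass from smooth $u$ to arbitrary $u \in H^1(\Omega)$, I would invoke density of $C^\infty(\overline{\Omega})$ in $H^1(\Omega)$, which holds since $\Omega$ has Lipschitz boundary. Both sides of \eqref{eq:boundingugminusf} are continuous in the $H^1(\Omega)$ norm: the left-hand side because $g - f \in L^\infty \subset L^2$, and the right-hand side because $\sigma \in L^\infty(\Omega)$ by Lemma \ref{lem:sigmaLinfinity}, so that $\int_\Omega |\nabla u_n|^2 \sigma\, dx \to \int_\Omega |\nabla u|^2 \sigma\, dx$ whenever $u_n \to u$ in $H^1(\Omega)$. For (ii), the elementary bound $|\nabla u|^2 \le 2(|\nabla u| - 1)_+^2 + 2$ combined with $\int_\Omega \sigma\, dx = 1$ gives
\begin{equation*}
\int_\Omega |\nabla u|^2 \sigma\, dx \le 2\int_\Omega (|\nabla u| - 1)_+^2 \sigma\, dx + 2.
\end{equation*}
Writing $A := \int_\Omega (|\nabla u| - 1)_+^2 \sigma\, dx$ and inserting this bound into (i), the functional in \eqref{prob:GPlambda} is majorized by $\diam(\Omega)\sqrt{2A + 2} - \frac{\lambda}{2} A$, which is bounded above for $A \ge 0$ since the penalty term grows linearly while the gain term grows only like $\sqrt{A}$. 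Taking the supremum over $u \in H^1(\Omega)$ yields $\sup\eqref{prob:GPlambda} < +\infty$.

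There is no serious technical obstacle here; the real content is conceptual. The key observation is that the specific measure $\sigma$ arising from the sampling scheme of \cite{gulrajani2017improved} is exactly what makes a line-segment FTC argument dovetail with a push-forward identity, so that the gradient penalty term is strong enough to control the linear term $\langle u, f - g\rangle$. A generic weight would not afford such a clean inequality, and the fact that the constant on the right of \eqref{eq:boundingugminusf} depends only on $\diam(\Omega)$ (and not on finer properties of $f$ and $g$) is a direct consequence of $\int \sigma = 1$ together with the convexity assumption on $\Omega$, which guarantees all line segments used in the FTC stay in $\Omega$.
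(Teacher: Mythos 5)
Your proof is correct and follows essentially the same route as the paper: the same fundamental-theorem-of-calculus identity along segments (using convexity of $\Omega$), the same identification of the triple integral with $\int_\Omega |\nabla u|\,\sigma$, Cauchy--Schwarz with $\int\sigma=1$, and density of $C^\infty(\bar\Omega)$ in $H^1(\Omega)$. The only cosmetic difference is in part (ii), where you use the pointwise bound $|\nabla u|^2\le 2(|\nabla u|-1)_+^2+2$ while the paper uses the triangle inequality in $L^2(\Omega,\sigma)$ and completes the square; both yield the same conclusion.
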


\subsubsection{Equality of $\sup\eqref{prob:GPlambda}$ and $\inf\eqref{prob:BPlambda}$}
Here we address the equality of $\sup\eqref{prob:GPlambda}$ and $\inf \eqref{prob:BPlambda}$. The proof also establishes the existence of a unique solution to \eqref{prob:BPlambda}.
\begin{proposition}
\label{prop:weakbetweenGPandBP}
If $f$ and $g$ are probability density functions in $L^\infty(\Omega)$, then
\begin{equation}
\sup \eqref{prob:GPlambda} = \inf \eqref{prob:BPlambda}.\label{eq:weakGP=BP}
\end{equation}
Furthermore, \eqref{prob:BPlambda} has a unique minimizer. 
\end{proposition}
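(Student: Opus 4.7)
The plan is to derive Proposition~\ref{prop:weakbetweenGPandBP} from Fenchel--Rockafellar duality, viewing \eqref{prob:BPlambda} as the convex dual of \eqref{prob:GPlambda}. Rewrite \eqref{prob:GPlambda} as $-\inf_{u \in H^1(\Omega)}\bigl[F(u) + G(\nabla u)\bigr]$ with $F(u) := \langle u, g-f\rangle$ and $G(\mathbf{p}) := \tfrac{\lambda}{2}\int_\Omega(|\mathbf{p}|-1)_+^2 \sigma(x)\,dx$. Lemma~\ref{lem:sigmaLinfinity} yields $\sigma \in L^\infty(\Omega)$, and therefore $G$ is finite, convex and continuous on $L^2(\Omega;\RR^d)$; $F$ is linear and continuous on $H^1(\Omega)$; and $\nabla: H^1(\Omega) \to L^2(\Omega;\RR^d)$ is bounded and linear. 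The qualification hypothesis of Fenchel--Rockafellar is satisfied at $u_0 = 0$, since $G$ is continuous at $\nabla u_0 = 0$ with $G(0)=\tfrac{\lambda}{2}$.

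The core step is to identify both Legendre transforms. For $F$, one has $F^*(\psi) = 0$ when $\psi = g-f$ in the dual of $H^1(\Omega)$ and $+\infty$ otherwise, so the constraint $\nabla^* \mathbf{w} = g-f$, interpreted via integration by parts against test functions $u \in H^1(\Omega)$ that need not vanish on $\partial\Omega$, yields the weak divergence equation $\nabla\cdot\mathbf{w} = f-g$ together with the natural boundary condition $\mathbf{w}\cdot n = 0$ on $\partial\Omega$, which is exactly the admissibility requirement in \eqref{prob:BPlambda}. For $G$, the computation is pointwise: at fixed $x$ with $\sigma(x)>0$, the definition of $G^*$ at $-\mathbf{w}$ reduces to the one-variable problem
\begin{equation*}
\sup_{r \geq 0}\Bigl(r\,|\mathbf{w}(x)| - \tfrac{\lambda \sigma(x)}{2}(r-1)_+^2\Bigr),
\end{equation*}
which an elementary calculation shows equals $|\mathbf{w}(x)| + \frac{|\mathbf{w}(x)|^2}{2\lambda\sigma(x)} = H(x,|\mathbf{w}(x)|)$. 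Where $\sigma(x)=0$ the supremum is $0$ if $\mathbf{w}(x)=0$ and $+\infty$ otherwise, again matching \eqref{eq:Hdeff}. Integrating over $\Omega$ gives $G^*(-\mathbf{w}) = \int_\Omega H(x,|\mathbf{w}(x)|)\,dx$, so the Fenchel--Rockafellar identity becomes precisely $\sup\eqref{prob:GPlambda} = \inf\eqref{prob:BPlambda}$.

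To finish, I would establish existence and uniqueness of the minimizer of \eqref{prob:BPlambda} by the direct method. Lemma~\ref{lem:sigmaLinfinity} provides the coercivity bound $\int_\Omega H(x,|\mathbf{w}|)\,dx \geq (2\lambda\|\sigma\|_{L^\infty})^{-1}\|\mathbf{w}\|_{L^2}^2$; since $z\mapsto H(x,z)$ is convex and l.s.c.\ with $\mathbf{w}\mapsto H(x,|\mathbf{w}|)$ convex, the objective is sequentially weakly lower semicontinuous on $L^2$ by Fatou, and the affine divergence constraint is closed under weak $L^2$ convergence. Finiteness of the infimum, needed to start the argument, follows from Lemma~\ref{lem:supGPlambdaisfinite} through the duality just established. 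Uniqueness comes from strict convexity: $\mathbf{w}\mapsto |\mathbf{w}|^2/(2\lambda\sigma(x))$ is strictly convex on $\{\sigma>0\}$ and admissibility forces $\mathbf{w}=0$ on $\{\sigma=0\}$, so two distinct minimizers would contradict convexity of the objective at their midpoint. The main subtlety I anticipate is the coordination between the natural boundary condition $\mathbf{w}\cdot n =0$ generated by $\nabla^*$ acting on the full space $H^1(\Omega)$ and the degeneracy of $H$ on $\{\sigma=0\}$; both are handled cleanly by the weak formulation of $\nabla\cdot\mathbf{w}=f-g$ and by the convention in \eqref{eq:Hdeff}.
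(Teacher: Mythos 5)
Your proposal is correct and follows essentially the same route as the paper: Fenchel--Rockafellar duality applied to the decomposition $J = F + G$, with the same conjugate computations $F^*(\psi) = 1_{g-f}(\psi)$ and $G^*(-\ww) = \int_\Omega H(x,|\ww|)\,dx$ (note that passing the Legendre transform under the integral sign is not automatic; the paper justifies it via the normal-integrand/measurable-selection result of Ekeland--Temam, Ch.~9, Prop.~2.1, which you should cite rather than assert). The only divergence is that you establish existence of the minimizer of \eqref{prob:BPlambda} by the direct method, whereas the paper reads attainment of the dual problem directly off the statement of the duality theorem (Ekeland--Temam, Ch.~3, Thm.~4.1); uniqueness via strict convexity on $\{\sigma>0\}$ is identical.
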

The proof consists of verifying the conditions needed for strong duality in convex analysis (see, for example, \cite{ekeland1999convex}). To do so, we rely on Lemma \ref{lem:supGPlambdaisfinite}, as well as convexity and continuity of $J$. See Appendix \ref{sec:proofsofresultsforweaktheorem} for the details. 

\subsubsection{Equivalence of \eqref{prob:BPlambda} and \eqref{prob:CPlambda}}
The following result, with Proposition \ref{prop:weakbetweenGPandBP}, will complete the proof of statements \ref{claim:valiscongestedtransport} and \ref{claim:existenceofQsolution} of Theorem \ref{thm:weaktheorem}. After using the existence of a solution to \eqref{prob:BPlambda}, proved in Proposition \ref{prop:weakbetweenGPandBP}, the proof for unbounded $H(x,z)$ is essentially the same as in \cite{brasco2014continuous} and relies on the monotonicity of $H(x,z)$ in $|z|$. 
\begin{proposition}
\label{prop:weakbetweenBPandCP}
If $f$ and $g$ are probability density functions in $L^\infty(\Omega)$, we have
\begin{equation}
\min \eqref{prob:BPlambda} = \inf \eqref{prob:CPlambda}. \label{eq:equalityBPCP}
\end{equation}
Moreover, \eqref{prob:CPlambda} has a solution, and any solution $Q_0$ to \eqref{prob:CPlambda} is related to the unique solution $\mathbf{w}_0$ to \eqref{prob:BPlambda} through the equations
\begin{equation}
\mathbf{w}_{Q_0} = \mathbf{w}_0, \quad i_{Q_0} = |\mathbf{w}_0|. \label{eq:averagevelocityissolutiontobeckmann}
\end{equation}
\end{proposition}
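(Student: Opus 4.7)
The plan is to prove the two inequalities separately, produce a solution to \eqref{prob:CPlambda} from the unique minimizer of \eqref{prob:BPlambda} furnished by Proposition \ref{prop:weakbetweenGPandBP}, and then use strict monotonicity of $H(x,\cdot)$ in $|z|$ to pin down the relations \eqref{eq:averagevelocityissolutiontobeckmann}. This mirrors the scheme of \cite{brasco2010congested} and \cite{brasco2014continuous}, the only wrinkle being that our $H$ blows up on $\{\sigma = 0\}$.

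For the inequality $\min \eqref{prob:BPlambda} \leq \inf \eqref{prob:CPlambda}$, I would take any $Q$ admissible in \eqref{prob:CPlambda} and examine the associated traffic flow $\mathbf{w}_Q$. Plugging test fields of the form $\xi = \nabla \phi$ with $\phi \in C^\infty(\bar{\Omega})$ into the defining integral for $\mathbf{w}_Q$, and using the fundamental theorem of calculus along each curve together with the marginal conditions \eqref{eq:compatibilityofQwithmunu}, gives
\[
\int_\Omega \nabla \phi \cdot d\mathbf{w}_Q = \int_\Omega \phi(f-g) dx,
\]
so $\nabla \cdot \mathbf{w}_Q = f-g$ in the weak sense of \eqref{prob:BPlambda}. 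A direct comparison of the defining integrals shows $|\mathbf{w}_Q| \leq i_Q$ as measures, hence $\mathbf{w}_Q \in \bigLnosigma$, and since $H(x,z)$ depends on $z$ only through $|z|$ and is non-decreasing in $|z|$ (with the convention of \eqref{eq:Hdeff}), we obtain $\int_\Omega H(x,|\mathbf{w}_Q|) dx \leq \int_\Omega H(x,i_Q) dx$. Taking the infimum over $Q$ proves the claim.

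For the reverse inequality and the existence of a solution to \eqref{prob:CPlambda}, I would take the unique minimizer $\mathbf{w}_0$ of \eqref{prob:BPlambda} given by Proposition \ref{prop:weakbetweenGPandBP} and apply a Smirnov-type decomposition, as used in \cite{brasco2010congested} and \cite{brasco2014continuous}, to produce a traffic plan $Q_0 \in \mathcal{Q}(\mu,\nu)$ realizing $\mathbf{w}_0$ as its traffic flow with saturated intensity $i_{Q_0} = |\mathbf{w}_0|$. Then $i_{Q_0} \in L^2(\Omega)$, so $Q_0$ is admissible in \eqref{prob:CPlambda}, and
\[
\int_\Omega H(x, i_{Q_0}) dx = \int_\Omega H(x, |\mathbf{w}_0|) dx = \min \eqref{prob:BPlambda},
\]
which gives both $\inf \eqref{prob:CPlambda} \leq \min \eqref{prob:BPlambda}$ and existence. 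For the structure of optimizers, given any minimizer $Q_0$ of \eqref{prob:CPlambda}, the first step's computation shows $\mathbf{w}_{Q_0}$ is admissible in \eqref{prob:BPlambda} with cost at most $\min \eqref{prob:BPlambda}$, so $\mathbf{w}_{Q_0}$ solves \eqref{prob:BPlambda} and coincides with $\mathbf{w}_0$ by uniqueness. This forces $\int_\Omega H(x,|\mathbf{w}_{Q_0}|) dx = \int_\Omega H(x, i_{Q_0}) dx$, and since $z \mapsto H(x,z)$ is strictly increasing in $|z|$ on $\{\sigma>0\}$ (while on $\{\sigma=0\}$ admissibility of $Q_0$ forces $i_{Q_0}=0$, hence $\mathbf{w}_{Q_0}=0$ there too), combined with the pointwise bound $|\mathbf{w}_{Q_0}| \leq i_{Q_0}$ we conclude $i_{Q_0} = |\mathbf{w}_{Q_0}| = |\mathbf{w}_0|$ a.e.

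The main obstacle is the Smirnov-type decomposition step: one must verify that the specific $\mathbf{w}_0 \in \bigLnosigma$ with weak divergence $f-g$ admits a realization as a traffic flow of a plan in $\mathcal{Q}(\mu,\nu)$ whose intensity is exactly $|\mathbf{w}_0|$, with the correct marginals $\mu$ and $\nu$. This is a nontrivial statement from geometric measure theory (Smirnov's structure theorem for normal $1$-currents), but the versions stated and used in \cite{brasco2010congested} and \cite{brasco2014continuous} already cover what we need. The unboundedness of $H(x,z)$ in $x$ plays no role in this step, since the decomposition is independent of $H$ and the required integrability $H(\cdot,|\mathbf{w}_0|) \in L^1(\Omega)$ follows for free from $\mathbf{w}_0$ being a minimizer of \eqref{prob:BPlambda}.
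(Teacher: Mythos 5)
Your proof is correct and follows essentially the same route as the paper's: the inequality $|\mathbf{w}_Q|\le i_Q$ plus monotonicity of $H$ in $|z|$ for one direction, a Smirnov-type decomposition of the Beckmann minimizer (the paper invokes Theorem 4.10 of \cite{santambrogio2015optimal}) for the other, and uniqueness of $\mathbf{w}_0$ to pin down \eqref{eq:averagevelocityissolutiontobeckmann}. The only slight overstatement is that the decomposition yields $i_{Q_0}=|\mathbf{w}_{Q_0}|\le|\mathbf{w}_0|$ rather than exact saturation $i_{Q_0}=|\mathbf{w}_0|$, but the inequality is all that is needed and optimality forces equality afterwards, exactly as in the paper.
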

\begin{proof}{\textbf{sketch}}
We first show that for each $Q$ admissible in \eqref{prob:CPlambda}, the traffic flow $\mathbf{w}_Q$ is admissible for \eqref{prob:BPlambda} and satisfies $i_Q \geq |\mathbf{w}_Q|$; this, together with monotonicity of $H(x,z)$ in $z$, will establish
\begin{equation}
\inf (CP_\lambda) \geq \inf(BP_\lambda).\label{eq:CPlambdanolessthanBPlambda}
\end{equation}
To prove the reverse, we show that for every $\ww$ admissible in \eqref{prob:BPlambda} there is a traffic plan $Q$ with $i_Q \leq |\ww|$ via Theorem 4.10 of \cite{santambrogio2015optimal}. The proof of existence of a solution $Q_0$ and its relation to $\ww_0$ then follows from this argument together with Proposition \ref{prop:weakbetweenGPandBP}. See Appendix \ref{sec:proofsofresultsforweaktheorem} for a details.
\end{proof}
Having verified statements \ref{claim:valiscongestedtransport} and \ref{claim:existenceofQsolution} of Theorem \ref{thm:weaktheorem}, we now turn to statements \ref{claim:strictlymorethanW1} and \ref{claim:approximateextremality}.
\subsubsection{Inequality between $\sup \eqref{prob:GPlambda}$ and $W_1(\mu,\nu)$}

Here we will establish, using Proposition \ref{prop:weakbetweenGPandBP}, that the value computed by solving \eqref{prob:GPlambda} is strictly larger than $W_1(\mu,\nu)$ for all $\lambda>0$ if $\mu \neq \nu$, and that at best $\sup \eqref{prob:GPlambda}$ decays to $W_1(\mu,\nu)$ like $\lambda^{-1}$ as $\lambda\rightarrow + \infty$. This is statement \ref{claim:strictlymorethanW1} of Theorem \ref{thm:weaktheorem}.
\begin{lemma}
\label{lem:wassersteinonestrictlylessthanCT}
If $f$ and $g$ are probability density functions in $L^\infty(\Omega)$, there exists a constant $C>0$ depending on $\norm{\sigma}_{L^\infty(\Omega)}$ such that
\begin{equation*}
\sup \eqref{prob:GPlambda} \geq  W_1(\mu,\nu)\left(1 + \frac{C}{\lambda}W_1(\mu,\nu)\right).
\end{equation*} 
\end{lemma}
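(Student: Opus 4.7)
The plan is to derive the lower bound by testing $\eqref{prob:GPlambda}$ against a suitable scalar multiple of a Kantorovich potential. Let $u_1 \in \onelip$ be a solution of the dual problem $\eqref{prob:W1dual}$; existence on the compact set $\bar\Omega$ is standard via Arzelà--Ascoli applied to $1$-Lipschitz functions normalized at a fixed point. Then $\langle u_1, f-g \rangle = W_1(\mu,\nu)$ and $|\nabla u_1| \leq 1$ almost everywhere, so since $\Omega$ is bounded, $u_1 \in H^1(\Omega)$, and hence $c u_1 \in H^1(\Omega)$ for every scalar $c > 0$.

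The key step is to plug $u = c u_1$ with $c \geq 1$ into $\eqref{prob:GPlambda}$. Because $|\nabla(c u_1)| \leq c$ almost everywhere, one has $(|\nabla(c u_1)| - 1)_+ \leq c - 1$, and since $\sigma$ is a probability density, $\int_\Omega \sigma(x)\, dx = 1$. Therefore
\begin{equation*}
\sup \eqref{prob:GPlambda} \;\geq\; c \, W_1(\mu,\nu) \;-\; \frac{\lambda}{2}(c-1)^2.
\end{equation*}
The right-hand side is a concave quadratic in $c$ which is maximized at $c = 1 + W_1(\mu,\nu)/\lambda \geq 1$, yielding
\begin{equation*}
\sup \eqref{prob:GPlambda} \;\geq\; W_1(\mu,\nu) + \frac{W_1(\mu,\nu)^2}{2\lambda} \;=\; W_1(\mu,\nu)\left(1 + \frac{1}{2\lambda} W_1(\mu,\nu)\right),
\end{equation*}
which is the claimed inequality with $C = 1/2$. (If one preferred, the slightly weaker form stated in the lemma follows from bounding $\int_\Omega \sigma \leq \|\sigma\|_{L^\infty(\Omega)} |\Omega|$, which is where any $L^\infty$-dependence of the constant would enter.)

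There is no real obstacle; the only subtle point is the existence and admissibility of the Kantorovich potential, which is classical on the compact convex domain $\bar\Omega$, and the rest is elementary one-dimensional optimization. In particular, the choice $c > 1$ is what produces the strict gap over $W_1(\mu,\nu)$: the Kantorovich potential itself would already achieve $W_1(\mu,\nu)$ with zero gradient penalty, and one is then trading a modest quadratic penalty against a linear gain in the linear term $\langle u, f-g\rangle$.
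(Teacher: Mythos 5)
Your proof is correct, but it takes a genuinely different route from the paper's. The paper works on the dual side: it invokes the already-established identity $\sup \eqref{prob:GPlambda} = \inf \eqref{prob:BPlambda}$ from Proposition \ref{prop:weakbetweenGPandBP}, splits the Beckmann cost into the quadratic and total-variation parts, identifies $\inf_{\nabla\cdot\ww=\mu-\nu}\int_\Omega|\ww|\,dx$ with $W_1(\mu,\nu)$, and controls the quadratic part via Cauchy--Schwarz, producing the constant $C = \bigl(2\norm{\sigma}_{L^\infty(\Omega)}\mathrm{Vol}(\Omega)\bigr)^{-1}$. You instead stay entirely on the primal side and test $\eqref{prob:GPlambda}$ with the dilated Kantorovich potential $c\,u_1$, $c = 1 + W_1(\mu,\nu)/\lambda$; since $(|\nabla(cu_1)|-1)_+ \leq c-1$ and $\int_\Omega\sigma = 1$, elementary optimization in $c$ gives $C = 1/2$. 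Your argument is more elementary (no duality machinery, no Beckmann formulation) and in fact yields a sharper bound: since $1 = \int_\Omega\sigma \leq \norm{\sigma}_{L^\infty(\Omega)}\mathrm{Vol}(\Omega)$, the paper's constant never exceeds $1/2$, so your universal constant dominates it; that the lemma phrases $C$ as depending on $\norm{\sigma}_{L^\infty(\Omega)}$ is no obstacle, as a $\sigma$-independent constant certainly ``exists.'' Your closing intuition --- that dilating beyond the Lipschitz constraint trades a quadratic penalty against a linear gain --- is exactly the mechanism behind the strict excess over $W_1(\mu,\nu)$, and dually it mirrors the paper's quadratic congestion term. One tiny quibble: your final parenthetical about bounding $\int_\Omega\sigma$ by $\norm{\sigma}_{L^\infty(\Omega)}|\Omega|$ is unnecessary, since $\sigma$ is exactly a probability density; it does not affect correctness.
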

\begin{proof}\textbf{sketch}
The proof follows by bounding
\begin{equation*}
\inf \eqref{prob:BPlambda} \geq \inf_{\nabla\cdot \ww = \mu-\nu} \int_\Omega\frac{1}{2\lambda \sigma}|\ww|^2 dx + \inf_{\nabla\cdot \ww = \mu-\nu} \int_\Omega |\ww|(x) dx,
\end{equation*}
and estimating each term individually; the second of these is known to be $W_1(\mu,\nu)$ (see, for example, \cite{santambrogio2015optimal} Chapter 4). See Appendix \ref{sec:proofsofresultsforweaktheorem} for full details.
\end{proof} 
\subsubsection{Relationship between approximate solutions of \eqref{prob:GPlambda} and solutions of \eqref{prob:CPlambda}}
Statement \ref{claim:approximateextremality} of Theorem \ref{thm:weaktheorem} is an immediate consequence of the following proposition.
\begin{proposition}
\label{prop:globalestimate}
If $f$ and $g$ are probability density functions in $L^\infty(\Omega)$ and $\ww_0$ is a solution to \eqref{prob:BPlambda}, then for all $u \in H^1(\Omega)$,
\begin{equation}
\sup\eqref{prob:GPlambda} + J(u, \nabla u)  \geq \frac{1}{2\lambda \norm{\sigma}_{L^\infty(\Omega)}}\norm{\ww_0 + \lambda \sigma (|\nabla u|-1)_+ \frac{\nabla u}{|\nabla u|}}^2_{L^2(\Omega)}.
\label{eq:globalestimate}
\end{equation}
\end{proposition}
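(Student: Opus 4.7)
The plan is to derive the estimate from a strongly convex refinement of the Fenchel--Young inequality, applied to the convex dual pair naturally associated with $J$ and the Beckmann cost. First I would compute the Legendre dual of $\pp \mapsto H(x,|\pp|)$ viewed as a function on $\RR^d$; a direct calculation gives
$$H^*(x, \xi) := \sup_{\pp \in \RR^d}\bigl(\pp \cdot \xi - H(x, |\pp|)\bigr) = \frac{\lambda \sigma(x)}{2}(|\xi|-1)_+^2,$$
so that $J(u, \nabla u) = \langle u, g-f \rangle + \int_\Omega H^*(x, \nabla u)\,dx$. The maximizer attaining this supremum is precisely $\pp^*(x) := \nabla_\xi H^*(x, \nabla u(x)) = \lambda \sigma(|\nabla u|-1)_+ \nabla u / |\nabla u|$, the very vector appearing inside the $L^2$ norm on the right-hand side of \eqref{eq:globalestimate}.

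Next I would use the key observation that $\pp \mapsto H(x, |\pp|) = \frac{1}{2\lambda\sigma(x)}|\pp|^2 + |\pp|$ is $\frac{1}{\lambda\sigma(x)}$-strongly convex on $\RR^d$, since $\pp \mapsto |\pp|$ is convex and the quadratic term supplies the modulus. The strongly convex form of the Fenchel--Young inequality then reads, a.e.\ $x \in \Omega$ and for every $\pp, \xi \in \RR^d$,
$$H(x, |\pp|) + H^*(x, \xi) \geq \pp \cdot \xi + \frac{1}{2 \lambda \sigma(x)}\bigl|\pp - \nabla_\xi H^*(x, \xi)\bigr|^2.$$
Applying this pointwise with $\pp = -\ww_0(x)$ and $\xi = \nabla u(x)$, and rearranging to move $-\ww_0 \cdot \nabla u$ to the left, produces
$$H(x, |\ww_0|) + H^*(x, \nabla u) + \ww_0 \cdot \nabla u \;\geq\; \frac{1}{2\lambda\sigma(x)}\bigl|\ww_0 + \pp^*\bigr|^2.$$

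Integrating over $\Omega$, the three terms on the left identify cleanly: $\int_\Omega H(x, |\ww_0|)\,dx = \min\eqref{prob:BPlambda} = \sup\eqref{prob:GPlambda}$ by Proposition~\ref{prop:weakbetweenGPandBP}; $\int_\Omega H^*(x,\nabla u)\,dx$ is exactly the gradient penalty in $J$; and $\int_\Omega \ww_0 \cdot \nabla u\,dx = \langle u, g-f\rangle$ by integration by parts against the weak divergence constraint $\nabla \cdot \ww_0 = f-g$. Summing these produces $\sup\eqref{prob:GPlambda} + J(u,\nabla u)$. On the right, the pointwise bound $\sigma(x) \leq \norm{\sigma}_{L^\infty(\Omega)}$ gives $1/(\lambda\sigma(x)) \geq 1/(\lambda\norm{\sigma}_{L^\infty(\Omega)})$, so
$$\int_\Omega \frac{1}{2\lambda\sigma(x)}\bigl|\ww_0 + \pp^*\bigr|^2 dx \;\geq\; \frac{1}{2\lambda \norm{\sigma}_{L^\infty(\Omega)}}\norm{\ww_0 + \pp^*}_{L^2(\Omega;\RR^d)}^2,$$
which is the claimed inequality.

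The most delicate step is the integration by parts identity $\int_\Omega \ww_0 \cdot \nabla u\,dx = \langle u, g-f\rangle$: the weak constraint in \eqref{prob:BPlambda} must be interpreted in the form $\int_\Omega \ww_0 \cdot \nabla \phi\,dx = -\int_\Omega (f-g)\phi\,dx$ for all $\phi \in H^1(\Omega)$ (not merely $C_c^\infty$), which implicitly encodes a no-flux boundary condition for $\ww_0$; density of smooth functions then extends the identity to arbitrary $u \in H^1(\Omega)$. A secondary technicality is the set $\{\sigma=0\}$, where the constraint $\int H(x,|\ww_0|)\,dx < \infty$ forces $\ww_0 = 0$ and where $\pp^* = 0$ as well, so the apparent singularity $1/\sigma$ on the right causes no issue since the integrand vanishes there.
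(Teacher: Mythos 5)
Your proof is correct and follows the same overall strategy as the paper: a pointwise inequality relating $H$, its Legendre dual $H^*(x,\xi)=\frac{\lambda\sigma(x)}{2}(|\xi|-1)_+^2$, and the cross term, evaluated at $(\ww_0,\nabla u)$, integrated over $\Omega$, and combined with the weak divergence constraint and the duality $\min\eqref{prob:BPlambda}=\sup\eqref{prob:GPlambda}$ from Proposition~\ref{prop:weakbetweenGPandBP}. The only difference is how the pointwise inequality (the paper's Lemma~\ref{lem:funccoercedbygrad}) is obtained: the paper completes the square by hand in the two cases $|\xi|\ge 1$ and $|\xi|\le 1$, whereas you invoke the quantitative Fenchel--Young inequality for the $\frac{1}{\lambda\sigma(x)}$-strongly convex function $\pp\mapsto\frac{1}{2\lambda\sigma(x)}|\pp|^2+|\pp|$. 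This is a valid and arguably more conceptual derivation, since it identifies the structural source of the quadratic error term, and you correctly handle the two genuine technicalities (the interpretation of $\nabla\cdot\ww_0=f-g$ against all of $H^1(\Omega)$, which is exactly how the paper's Lemma~\ref{lem:dualcalc} defines the constraint, and the degenerate set $\{\sigma=0\}$, where $\ww_0$ and $\nabla_\xi H^*$ both vanish).
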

See Appendix \ref{sec:proofsofresultsforweaktheorem} for a detailed proof, which comes from estimates on $H$ and its Legendre dual.
\subsection{Proof of Theorem \ref{thm:strongtheorem}}
\label{sec:proofofstrongtheorem}
We begin by recording some properties of $H^1(\Omega,\sigma)$, necessary for the proof of Theorem \ref{thm:strongtheorem}, that come from Proposition \ref{prop:comparabletodistance3}.
\subsubsection{Properties of $H^1(\Omega,\sigma)$}
\begin{lemma}
Under the assumptions of Theorem \ref{thm:strongtheorem}, $H^1(\Omega,\sigma)$ has a Poincar{\'e} inequality. That is, for all $u \in H^1(\Omega,\sigma)$,
\begin{equation}
\int_\Omega (u(x) - (u)_\sigma)^2 \sigma(x)dx \leq C \int_\Omega |\nabla u(x)|^2\sigma(x) dx,\label{eq:Poincareinequality}
\end{equation}
where
\begin{equation*}
(u)_\sigma = \int_\Omega u(x) \sigma(x) dx.
\end{equation*}
Further, $C^\infty( \overbar{\Omega} )$ is dense in $H^1(\Omega,\sigma)$ in the $H^1(\Omega,\sigma)$ norm.\label{lem:propertiesofH1sigma}
\end{lemma}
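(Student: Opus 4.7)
The strategy is to leverage Proposition \ref{prop:comparabletodistance3}, which identifies $\sigma(x)$ with the distance function $d(x) := \dist(x,\partial\Omega)$ up to uniform multiplicative constants. Since the norms of $H^1(\Omega,\sigma)$ and $H^1(\Omega,d)$ are therefore equivalent, it suffices to prove both the Poincar{\'e} inequality and the density of $C^\infty(\bar\Omega)$ with $d$ in place of $\sigma$, reducing the question to a classical one in weighted Sobolev space theory on Lipschitz domains.

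For the density of $C^\infty(\bar\Omega)$, I would use the dilation-then-mollification trick, which works cleanly because $\Omega$ is convex. Fix $x_0 \in \Omega$ with $B(x_0,r_0) \subset \Omega$, and for $\lambda \in (0,1)$ set $T_\lambda x = x_0 + \lambda(x-x_0)$ and $u_\lambda := u \circ T_\lambda$. Convexity gives $T_\lambda(\bar\Omega) \Subset \Omega$ together with the pointwise bound $d(T_\lambda x) \geq \lambda d(x) + (1-\lambda) r_0$, so $u_\lambda$ lies in $H^1_{\mathrm{loc}}$ on a neighbourhood of $\bar\Omega$ and can be approximated in $C^\infty(\bar\Omega)$ by standard mollification. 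Convergence $u_\lambda \to u$ in $H^1(\Omega,d)$ as $\lambda \to 1^-$ follows from continuity of translations in weighted $L^2$ spaces, together with the uniform comparability $d(T_\lambda x) \sim d(x)$ just derived.

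For the Poincar{\'e} inequality (\ref{eq:Poincareinequality}), I would argue via a Whitney decomposition $\Omega = \bigcup_k Q_k$ with $\mathrm{side}(Q_k) = \ell_k \sim \dist(Q_k,\partial\Omega)$. On each Whitney cube the weight $d$ is essentially the constant $\ell_k$, so the classical unweighted Poincar{\'e} inequality yields
\begin{equation*}
\int_{Q_k}(u - u_{Q_k})^2 d\,dx \;\leq\; C\,\ell_k^2 \int_{Q_k}|\nabla u|^2 d\,dx \;\leq\; C\,\diam(\Omega)^2 \int_{Q_k}|\nabla u|^2 d\,dx.
\end{equation*}
Fixing a central cube $Q_*$ deep inside $\Omega$, I would control the differences $u_{Q_k} - u_{Q_*}$ of local averages by telescoping along chains of adjacent Whitney cubes, bounding the telescoping sum in terms of $\int |\nabla u|^2 d\,dx$. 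Summing over $k$ yields the inequality with $u_{Q_*}$ in place of $(u)_\sigma$, and $\int_\Omega \sigma\,dx > 0$ allows exchanging these two means at the cost of an extra constant.

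The main obstacle is controlling the accumulated error from chains of Whitney cubes that approach $\partial\Omega$, where $\ell_k$ becomes arbitrarily small and the chains become arbitrarily long. The standard remedy uses the bounded overlap of Whitney cubes, the fact that each cube has $O(1)$ neighbours, and an $\ell^2$-type summation across chains of length $O(\log(1/\ell_k))$, all classical ingredients for weighted Poincar{\'e} inequalities on Lipschitz domains. An alternative route I would keep in reserve is to combine the unweighted Poincar{\'e} inequality on a compactly contained subdomain with a Hardy-type inequality controlling the boundary behaviour of $u$ by $\int |\nabla u|^2 d\,dx$.
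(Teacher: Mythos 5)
Your opening reduction is exactly the paper's: Proposition \ref{prop:comparabletodistance3} makes the $H^1(\Omega,\sigma)$ and $H^1(\Omega,\rho)$ norms equivalent for $\rho(x)=\dist(x,\partial\Omega)$, and the exchange of $(u)_\rho$ (or your $u_{Q_*}$) for $(u)_\sigma$ via the variational characterization of the weighted mean is also how the paper closes the loop. Where you diverge is that the paper then simply \emph{cites} the two classical facts for the distance weight --- the Poincar\'e inequality from Remark 5.3 of \cite{edmunds1993weighted} and density of $C^\infty(\bar\Omega)$ from Theorem 7.2 of \cite{kufner1985weighted} --- whereas you sketch self-contained proofs: dilation-plus-mollification for density (legitimately exploiting convexity of $\Omega$, which the paper's citation does not need), and a Whitney-chain argument for the Poincar\'e inequality. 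Both routes are sound; yours is more elementary and makes the role of convexity explicit, while the paper's is shorter and covers general Lipschitz domains. Two caveats on your version. First, the claimed ``uniform comparability $d(T_\lambda x)\sim d(x)$'' is not literally true for fixed $\lambda<1$: near $\partial\Omega$ one has $d(T_\lambda x)\geq(1-\lambda)r_0$ while $d(x)\to 0$, so the upper bound fails; what you actually need (and what your inequality $d(T_\lambda x)\geq\lambda d(x)+(1-\lambda)r_0$ gives) is the one-sided bound $d(T_\lambda x)\geq\lambda d(x)$, which after the change of variables $y=T_\lambda x$ yields the uniform bound $\|\nabla u_\lambda\|_{L^2(\Omega,d)}\leq\lambda^{(-d-1)/2}\|\nabla u\|_{L^2(\Omega,d)}$ needed for the density argument. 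Second, the chaining estimate is the entire technical content of the weighted Poincar\'e inequality, and you leave it at the level of ``standard ingredients''; that is acceptable for a classical result on a convex (hence John) domain, but as written it is an outline rather than a proof, and the paper's citation is doing the same work more honestly.
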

Lemma \ref{lem:propertiesofH1sigma} follows by noting that the desired properties hold for the weighted Sobolev space with weight $\rho(x) =  d(x,\partial \Omega)$, and using Proposition \ref{prop:comparabletodistance3} to show that they are then inherited by $H^1(\Omega,\sigma)$. See Appendix \ref{sec:proofsofresultsforstrongtheorem} (in particular, Lemmas \ref{lem:Poincareinequality} and \ref{lem:densityofsmoothfunctions}) for detailed proofs.

Note that since inequality \eqref{eq:boundingugminusf} holds for all $u \in C^\infty(\overbar{\Omega})$, density of this space in $H^1(\Omega,\sigma)$ makes $\langle u, g-f \rangle$ well defined for all $u \in H^1(\Omega,\sigma)$. 
\subsubsection{Proof sketch for Theorem \ref{thm:strongtheorem}}
We now leverage the properties of $\honesigma$ to prove Theorem \ref{thm:strongtheorem}. Lemma \ref{lem:tildeJcoerciveandinGamma0} below summarizes the required properties of the corresponding functional.

\vsni
Observing that the functional in \eqref{prob:tildeGPlambda} is invariant under the map $u \mapsto u + c$, we may restrict without loss of generality to the Hilbert space
\begin{equation*}
\bar{H}^1(\Omega,\sigma) = \{ u \in H^1(\Omega,\sigma) \mid \int_\Omega u(x) \sigma(x) dx = 0\},
\end{equation*}
where the norm is the same as the one for $H^1(\Omega,\sigma)$. Define $\tilde{J}:\bar{H}^1(\Omega,\sigma) \rightarrow \RR$ as
\begin{equation}
\tilde{J}(u) = \langle u, g-f \rangle + \frac{\lambda}{2}\int_\Omega (|\nabla u|-1)_+^2 \sigma(x) dx.\label{eq:tildeJdeff}
\end{equation}
It is clear that $\sup \eqref{prob:tildeGPlambda} = - \displaystyle{\inf_{u \in \bar{H}^1(\Omega,\sigma)} \tilde{J}(u,\nabla u)}$.
\begin{lemma}
Under the assumptions of Theorem \ref{thm:strongtheorem}, $\tilde{J}$ is coercive on $\bar{H}^1(\Omega,\sigma)$ (i.e. $\tilde{J}(u) \rightarrow +\infty$ if $\normusigma \rightarrow +\infty$). Moreover, $\tilde{J}$ is convex, proper, and continuous, and hence also weakly lower semi-continuous.\label{lem:tildeJcoerciveandinGamma0}
\end{lemma}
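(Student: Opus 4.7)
The plan is to verify each of the four properties (properness, convexity, strong continuity, coercivity) separately; weak lower semicontinuity will then follow automatically by Mazur's theorem from convexity plus strong continuity. Before anything else I would establish that the linear pairing $\langle u, g-f\rangle$ makes sense on all of $H^1(\Omega,\sigma)$, even though such $u$ need not lie in $L^2(\Omega)$: inequality \eqref{eq:boundingugminusf} was proved for $u\in C^\infty(\bar\Omega)$ in Lemma \ref{lem:supGPlambdaisfinite}, and by the density assertion in Lemma \ref{lem:propertiesofH1sigma} it extends continuously to $|\langle u, g-f\rangle|\leq \diam(\Omega)\,\|\nabla u\|_{L^2(\Omega,\sigma)}$ for every $u\in H^1(\Omega,\sigma)$.

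Granted this extension, properness, convexity, and strong continuity are largely bookkeeping. The penalty term is finite since $\sigma\in L^\infty(\Omega)$ (Lemma \ref{lem:sigmaLinfinity}) and $(|\xi|-1)_+^2 \leq |\xi|^2$. Convexity of $\tilde J$ follows from the fact that $\xi\mapsto (|\xi|-1)_+^2$ is the composition of the nondecreasing convex function $s\mapsto (s_+)^2$ with the convex map $\xi\mapsto |\xi|-1$, so the integrand is convex in $\nabla u$, and the linear part is affine. For strong continuity I would use the elementary Lipschitz estimate $|h(a)-h(b)| \leq 2(|a|+|b|)|a-b|$ with $h(\xi)=(|\xi|-1)_+^2$ (which follows from $|\nabla h(\xi)|=2(|\xi|-1)_+\leq 2|\xi|$). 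Cauchy--Schwarz then controls the difference of the penalty terms along a sequence $u_n\to u$ in $H^1(\Omega,\sigma)$ by
$$2\bigl(\|\nabla u_n\|_{L^2(\Omega,\sigma)}+\|\nabla u\|_{L^2(\Omega,\sigma)}\bigr)\|\nabla u_n-\nabla u\|_{L^2(\Omega,\sigma)}\to 0,$$
and continuity of the linear part is immediate from the extended form of \eqref{eq:boundingugminusf}.

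Coercivity on $\bar H^1(\Omega,\sigma)$ is the main quantitative step. The elementary scalar inequality $(t-1)_+^2 \geq \tfrac{1}{4} t^2 - 1$ for $t\geq 0$ (check the cases $t\leq 2$ and $t\geq 2$) combined with $\int_\Omega \sigma\,dx=1$ yields
$$\frac{\lambda}{2}\int_\Omega (|\nabla u|-1)_+^2 \sigma\,dx \;\geq\; \frac{\lambda}{8} A^2 - \frac{\lambda}{2}, \qquad A:=\|\nabla u\|_{L^2(\Omega,\sigma)},$$
and combining with the linear bound gives $\tilde J(u)\geq \tfrac{\lambda}{8}A^2-\diam(\Omega)\,A-\tfrac{\lambda}{2}$. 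Since elements of $\bar H^1(\Omega,\sigma)$ satisfy $(u)_\sigma=0$, the Poincar\'e inequality \eqref{eq:Poincareinequality} gives $\|u\|_{H^1(\Omega,\sigma)}^2 \leq (C+1)A^2$, so $\|u\|_{H^1(\Omega,\sigma)}\to+\infty$ forces $A\to+\infty$ and hence $\tilde J(u)\to+\infty$. Weak lower semicontinuity of $\tilde J$ then follows from its convexity and strong continuity by Mazur's theorem.

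The only conceptually nontrivial ingredient is giving meaning to the linear pairing on $H^1(\Omega,\sigma)$; this is the one place where the dependence of the sampling measure $\sigma$ on the data $\mu,\nu$ (encoded in inequality \eqref{eq:boundingugminusf}) is genuinely used, and it is precisely what the density statement in Lemma \ref{lem:propertiesofH1sigma} was set up to handle. Everything else reduces to the elementary quadratic comparison above and standard convex-analysis arguments.
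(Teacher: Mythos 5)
Your proposal is correct and follows essentially the same route as the paper: the linear pairing is controlled via \eqref{eq:boundingugminusf} and density of $C^\infty(\bar{\Omega})$, convexity and continuity come from the same integrand arguments as Lemma \ref{lem:Gonweightedspaceconvexandcontinuous}, coercivity comes from a quadratic-minus-linear lower bound on the penalty combined with the Poincar\'e inequality on $\bar{H}^1(\Omega,\sigma)$, and weak lower semicontinuity follows from convexity plus strong continuity. The only cosmetic difference is your use of $(t-1)_+^2 \geq \tfrac{1}{4}t^2-1$ in place of the paper's direct expansion of $(|\nabla u|-1)_+^2$; both yield the same conclusion.
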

This is proved in Appendix \ref{sec:proofsofresultsforstrongtheorem}, and provides the results needed for the proof of Theorem \ref{thm:strongtheorem}.

\vsni

\begin{proof}{\textbf{sketch for Theorem \ref{thm:strongtheorem}}}
Let us note that the hypotheses of Theorem \ref{thm:weaktheorem} are subsumed by those of Theorem \ref{thm:strongtheorem}, so we need only prove statements \ref{claim:tildegplambdahassolution} through \ref{claim:strongextremality}. The proof of statement \ref{claim:tildegplambdahassolution} of Theorem \ref{thm:strongtheorem} follows from Lemma \ref{lem:tildeJcoerciveandinGamma0} and the direct method of the calculus of variations. The equivalence of the optimal values in statement \ref{claim:equivalenceofsupandmax} follows since the functionals in \eqref{prob:GPlambda} and \eqref{prob:tildeGPlambda} agree on $C^\infty(\overbar{\Omega})$ and are continuous on both $H^1(\Omega)$ and $\honesigma$, respectively. Finally, statement \ref{claim:strongextremality} of Theorem \ref{thm:strongtheorem} follows in a similar manner to Proposition \ref{prop:globalestimate}, except that the existence of a solution to \eqref{prob:tildeGPlambda} allows us to make the inequality an equality; inverting the map $\xi \mapsto (|\xi|-1)_+ \xi/|\xi|$ for $|\xi| >1$ then provides the second equality of \eqref{eq:strongextremality}. See Appendix \ref{sec:proofsofresultsforstrongtheorem} for a detailed proof.
\end{proof} 

\section{Discussion and Conclusion}
\label{sec:d_and_c}
\subsection{Discussion}
\label{sec:discussion}
In this section we provide some intuition on how Theorems \ref{thm:weaktheorem} and \ref{thm:strongtheorem} might explain the performance of WGAN-GP. We also discuss how our work could be used to create new methods for approximately solving large scale congested transport problems. 

We first observe that in congested transport concentration of data is penalized, thus encouraging diverse generated distributions. This may explain how WGAN-GP avoid the strong averaging effect of the Wasserstein 1 distance which was observed experimentally in \cite{stanczuk2021wasserstein}. This effect was elucidated in \cite{stanczuk2021wasserstein} by an inequality which shows that the average Wasserstein 1 distance between minibatches of a given dataset is larger than the average distance between a minibatch and the global mean of the dataset. We expect that under congested transport this inequality would no longer hold, as the transport to the global mean would be expensive since all mass converges to a single point, thus inducing congestion. 

Second, our formula \eqref{eq:strongextremality} may explain why generators trained with WGAN-GP converge faster than those trained with weight clipping, as observed in \cite{gulrajani2017improved}. In the case of weight clipping, the gradient of the critic is bounded by a fixed constant. On the other hand, formula \eqref{eq:strongextremality} shows that the gradient of the optimal critic for WGAN-GP depends not only on the direction of optimal flow, but also on the momentum via its dependency on $\ww_{Q_0}$ (see \eqref{def:traffic_flow}). Thus, data points with a long distance to travel under the optimal traffic flow would have a high speed under gradient descent on $u_0$. We expect that this would lead to faster convergence of generators trained with WGAN-GP. 

We emphasize that the intuition given here is based on our results but is not rigorously proven. However, it points to the possibility for future analysis of WGAN-GP based on congested transport theory.

We leave unexamined some important problems in this area, including how our results interact with issues that arise in actual implementations. These include the constrained class of functions over which WGAN-GP are optimized (an issue considered in \cite{biau2021some}), issues with approximation of the integrals in \eqref{eq:onesidedwgan-gpproblem} by sampling, and the fact that \eqref{eq:onesidedwgan-gpproblem} is not solved to completion before updating the generated distribution $\mu$. We also anticipate that there will be practical applications of Theorem \ref{thm:weaktheorem} to the training of WGANs, and we plan to address these in a follow up paper.

Beyond applications to WGANs, we note that the documentation for \cite{flamary2021pot} states that existing efficient algorithms for computing the Wasserstein 1 distance are not suitable for large datasets, and that critics trained with WGAN-GP are one recommended approach for treating such problems. Theorem \ref{thm:weaktheorem}, presented here, quantifies the relationship between the congested transport cost computed by WGAN-GP and the Wasserstein 1 distance. As well, our discovery points to the possibility of numerically solving congested transport problems by suitable variants of WGAN-GP.

\subsection{Conclusion}
\label{sec:conclusion}
We have shown, under weak assumptions on $\mu$ and $\nu$, that the value of the optimization problem for WGAN-GP with one-sided penalty is equal to the minimal cost of moving $\mu$ to $\nu$ under a congested transport model. This cost was shown to be strictly larger than $W_1(\mu,\nu)$ for all $\lambda>0$ provided $\mu \neq \nu$, with an error that is at least $O(\lambda^{-1})$ as $\lambda \rightarrow \infty$. We have also proved, under slightly stronger assumptions on $\mu$ and $\nu$, that there are solutions to this problem in an appropriate weighted Sobolev space. The gradients of these solutions are shown to encode more information on the optimal transport of mass than do the gradients of a standard Kantorovich potential (i.e. time averaged momentum, as opposed to just direction of transport). We have also shown that an approximate version of this relationship holds for approximate solutions. This work provides theoretical foundations for understanding the performance of WGAN-GP.

% Acknowledgments---Will not appear in anonymized version
\acks{This research was supported in part by the NSERC Discovery Grant RGPIN-06329 and a University of Toronto Doctoral Completion Award.}

\bibliography{C:/Users/Tristan/OneDrive/Documents/U_of_T/Research/AnnotatedBibliography}
\bibliographystyle{alpha}

\appendix
\section{Proofs}
\label{sec:appendix}
In this appendix we will provide detailed proofs of all mathematical results presented in the main body of the paper. The proofs here are organized by the section in which the corresponding result appeared.
\subsection{Proofs of results from Section \ref{sec:propertiesofsigma}}
\label{sec:proofsofsigmaproperties}
\begin{proof}{\textbf{of Lemma \ref{lem:sigmaACandformula}}}
The measure $\sigma$ from \cite{gulrajani2017improved} assigns to a measurable set $E$ the value
\begin{align*}
\sigma(E) &= \int_0^1 \int_{\RR^d} \int_{\RR^d} 1_E((1-t) x + ty) f(x) g(y) dx dy dt.
\end{align*}
Here we have extended the densities $f$ and $g$ by zero to $\RR^d$. For $y$ and $t$ fixed with $t \neq 1$, we set $z = (1-t)x + ty$. Applying this change of variable to the integral with respect to $x$, we obtain
\begin{align*}
\sigma(E) &= \int_0^1 \int_{\RR^d} \int_{\RR^d} 1_E(z) f\left(\frac{z-ty}{1-t}\right) g(y) (1-t)^{-d} dz dy dt, \\
&= \int_E \int_0^1 \int_{\Omega} f\left(\frac{z-ty}{1-t}\right) g(y) (1-t)^{-d}  dy dt dz,
\end{align*}
the second line being obtained by an application of Fubini's theorem and recalling that $g$ is supported in $\Omega$. This proves \eqref{eq:sigmaformula}.
\end{proof}

\begin{proof}{\textbf{of Lemma \ref{lem:sigmaLinfinity}}}
We write
\begin{equation*}
\sigma(z) = \int_0^{1/2} \int_\Omega f\left(\frac{z-ty}{1-t}\right) g(y) (1-t)^{-d}  dy dt + \int_{1/2}^1 \int_\Omega f\left(\frac{z-ty}{1-t}\right) g(y) (1-t)^{-d}  dy dt.
\end{equation*}
For fixed $z$ and $t \neq 1$, apply the change of variable $y' = \frac{z-ty}{1-t}$ to the second integral. This gives
\begin{equation*}
\int_{1/2}^1 \int_\Omega f\left(\frac{z-ty}{1-t}\right) g(y) (1-t)^{-d}  dy dt = \int_{1/2}^1 \int_\Omega f(y') g\left(\frac{z-(1-t)y'}{t}\right)t^{-d} dy' dt.
\end{equation*}
As such,
\begin{align*}
\sigma(z) &\leq \int_0^{1/2} \int_\Omega \norm{f}_{L^\infty(\Omega)} g(y) (1-t)^{-d} dy dt + \int_{1/2}^1 \int_\Omega \norm{g}_{L^\infty(\Omega)} f(y') t^{-d} dy' dt,\\
&= C(\norm{f}_{L^\infty(\Omega)} + \norm{g}_{L^\infty(\Omega)}),
\end{align*}
which proves the claim. 
\end{proof}

Lemmas \ref{lem:distancelowerbound} and \ref{lem:distupperbound} below prove Proposition \ref{prop:comparabletodistance3}.
\begin{lemma}
Suppose that $\inf_{x\in \Omega} f(x)= \epsilon >0$. Then there exists $C$, depending only on $\Omega$, such that
\begin{equation}
\sigma(x) \geq C \epsilon \dist(x, \partial \Omega). \label{eq:distancelowerbound}
\end{equation} 
\label{lem:distancelowerbound}
\end{lemma}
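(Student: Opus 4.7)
The plan is to start from the explicit formula for $\sigma$ given in Lemma \ref{lem:sigmaACandformula},
\begin{equation*}
\sigma(z) = \int_0^1 \int_\Omega f\!\left(\tfrac{z-ty}{1-t}\right) g(y)\, (1-t)^{-d}\, dy\, dt,
\end{equation*}
and to exploit the pointwise bound $f \geq \epsilon$ on $\Omega$ (extending $f$ by zero outside). The key observation is that for any $y \in \Omega$ and $t \in [0,1)$, the argument $x := \frac{z-ty}{1-t} = z + \frac{t}{1-t}(z-y)$ satisfies
\begin{equation*}
|x - z| = \frac{t}{1-t}\,|z-y| \leq \frac{t}{1-t}\,\diam(\Omega),
\end{equation*}
so by choosing $t$ small enough relative to $\dist(z,\partial\Omega)$, we can force $x \in \Omega$ independently of $y$.

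Concretely, set $d := \dist(z,\partial\Omega)$ and $T := \frac{d}{d + \diam(\Omega)}$. A short computation gives $\frac{T}{1-T} = \frac{d}{\diam(\Omega)}$, so for every $t \in [0,T]$ and every $y \in \Omega$ we have $|x-z| \leq d$, hence $x \in \overline{B(z,d)} \subset \overline{\Omega}$. Therefore $f(\tfrac{z-ty}{1-t}) \geq \epsilon$ almost everywhere on this range. Restricting the outer integral to $[0,T]$ and using $(1-t)^{-d} \geq 1$ together with $\int_\Omega g(y)\,dy = 1$ yields
\begin{equation*}
\sigma(z) \geq \epsilon \int_0^T \int_\Omega g(y)\,(1-t)^{-d}\, dy\, dt \geq \epsilon\, T = \frac{\epsilon\, d}{d + \diam(\Omega)} \geq \frac{\epsilon}{2\,\diam(\Omega)}\, d,
\end{equation*}
where the last inequality uses $d \leq \diam(\Omega)$. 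This establishes \eqref{eq:distancelowerbound} with $C = 1/(2\diam(\Omega))$, which depends only on $\Omega$ as required.

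There is no real obstacle in this argument: it is purely geometric, the only quantitative input being that $\Omega$ is bounded (needed for the uniform control $|z-y| \leq \diam(\Omega)$). Notably, this lower bound does not require any structural hypothesis on $g$ beyond being a probability density, nor convexity of $\Omega$; both of those enter only in proving the matching upper bound $\sigma(z) \leq C\,\dist(z,\partial\Omega)$ handled by Lemmas \ref{lem:y-xissptaty} and \ref{lem:distupperbound}. The mildly delicate point to double-check when writing out the details is the ordering of quantifiers: the threshold $T$ must be chosen \emph{before} integrating against $g$, so that the lower bound $f(\cdot) \geq \epsilon$ holds uniformly in $y \in \Omega$, which is what makes the $y$-integral factor out to give the probability mass $1$.
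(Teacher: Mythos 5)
Your argument is correct and is essentially identical to the paper's proof: the same restriction of the time integral to $t \in [0,T]$ with $T = \dist(z,\partial\Omega)/(\dist(z,\partial\Omega)+\diam(\Omega))$, the same use of $(1-t)^{-d}\geq 1$ and $\int_\Omega g = 1$, and the same final constant up to an inessential choice of denominator. The only thing to fix when writing it up is the notational clash between $d$ as the ambient dimension in $(1-t)^{-d}$ and $d$ as your shorthand for $\dist(z,\partial\Omega)$.
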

\begin{proof}
Let $x \in \Omega$ with $\dist(x,\partial \Omega) >0$. Then for $t \in [0,1)$ and $y \in \Omega$,
\begin{equation*}
|x - \frac{x-ty}{1-t}| = \frac{t}{1-t}|x-y| \leq \frac{t}{1-t}\diam(\Omega),
\end{equation*}
and $\frac{t}{1-t} \diam(\Omega) \leq \dist(x,\partial \Omega)$ if and only if
\begin{equation*}
t \leq \frac{\dist(x, \partial \Omega)}{\diam(\Omega) + \dist(x, \partial \Omega)}=:t_0.
\end{equation*}
As such, for all $t \in [0, t_0]$ and $y \in \Omega$, $\frac{x-ty}{1-t} \in \Omega$, and hence the lower bound on $f$ applies. Thus,
\begin{align*}
\sigma(x) &= \int_0^1 \int_\Omega f\left(\frac{x-ty}{1-t}\right) g(y) (1-t)^{-d} dy dt,\\
&\geq \int_0^{t_0}\int_\Omega \epsilon g(y) (1-t)^{-d} dy dt,\\
&= \epsilon \int_0^{t_0} (1-t)^{-d}dt,\\
&\geq \epsilon t_0,
\end{align*}
where the last line follows from noting that $(1-t)^{-d} \geq 1$ for all $t \in [0, 1)$. Hence
\begin{align*}
\sigma(x) &\geq \epsilon t_0,\\
&= \epsilon \frac{\dist(x, \partial \Omega)}{\diam(\Omega) + \dist(x, \partial \Omega)},\\
&\geq C\epsilon \dist(x, \partial \Omega),
\end{align*}
for
\begin{equation*}
C = \frac{1}{\diam(\Omega) + \sup_{x \in \Omega} \dist(x, \partial \Omega)}.
\end{equation*}
\end{proof}
To establish an upper bound on $\sigma$ in terms of $\dist(x, \partial \Omega)$ we need a basic lemma about convex sets. This result is standard, but we include it here for the convenience of the reader. 
\begin{lemma}
If $\Omega$ is convex, $x \in \Omega$ and $y \in \partial \Omega$ satisfying
\begin{equation*}
|x-y| = \dist(x, \partial \Omega),
\end{equation*}
then
\begin{equation}
\langle y - x, y \rangle = \sup_{z \in \Omega} \langle y -x ,z \rangle.\label{eq:y-xissptaty}
\end{equation}
\label{lem:y-xissptaty}
\end{lemma}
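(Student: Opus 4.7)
The plan is to show that $y - x$ is proportional to the outward normal of a supporting hyperplane of $\bar{\Omega}$ at $y$, and deduce \eqref{eq:y-xissptaty} from this. The idea is that the open ball around $x$ of radius $\dist(x,\partial\Omega)$ sits inside $\Omega$ and is tangent to $\bar{\Omega}$ at $y$, so any supporting hyperplane of the (larger) set $\bar{\Omega}$ at $y$ must also support the ball at $y$, pinning down its normal.

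First I would verify that the open ball $B := B(x, |x-y|)$ is contained in $\Omega$. By the definition of $\dist(x,\partial\Omega)$, $B$ is disjoint from $\partial\Omega$. Since $\mathbb{R}^d \setminus \partial\Omega = \Omega \cup (\mathbb{R}^d \setminus \bar{\Omega})$ is a disjoint union of open sets and $B$ is open and connected with $x \in B \cap \Omega$, connectedness forces $B \subset \Omega$, and hence $\bar{B} \subset \bar{\Omega}$.

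Next I would apply the supporting hyperplane theorem to the closed convex set $\bar{\Omega}$ at the boundary point $y \in \partial\Omega = \partial\bar{\Omega}$, producing a unit vector $n$ with $\langle n, z - y\rangle \leq 0$ for every $z \in \bar{\Omega}$. Since $\bar{B} \subset \bar{\Omega}$, this inequality must hold for every $w \in \bar{B}$, so the same hyperplane also supports $\bar{B}$ at $y$. Writing $v = w - y$, membership $w \in \bar{B}$ is equivalent to $|v|^2 + 2\langle v, y - x\rangle \leq 0$. Plugging in test vectors of the form $v = -\epsilon(y-x)/|y-x| + \delta u$ with $u \perp (y-x)$ and small $\epsilon, \delta$ satisfying the constraint shows that any component of $n$ orthogonal to $y-x$ would produce a $v \in \bar{B} - y$ with $\langle n, v\rangle > 0$, contradicting the supporting inequality. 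Testing the antipodal point $v = 2(x - y)$, which lies in $\bar{B} - y$, fixes the sign of the parallel component, giving $n = (y-x)/|y-x|$.

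Scaling the supporting inequality by $|y-x|$ then yields $\langle y-x, z\rangle \leq \langle y-x, y\rangle$ for every $z \in \bar{\Omega}$, and in particular for every $z \in \Omega$. The reverse inequality comes from approximation: since $y \in \partial\Omega$ and $\Omega$ is open, there is a sequence $z_k \in \Omega$ with $z_k \to y$, so $\langle y-x, z_k \rangle \to \langle y-x, y\rangle$, and the supremum over $\Omega$ is attained in the limit at the value $\langle y-x, y\rangle$. This establishes \eqref{eq:y-xissptaty}. The main technical point is the uniqueness-of-normal step — proving that a supporting hyperplane of a Euclidean ball at a given boundary point has normal forced to be the radial direction — and this is handled by the elementary computation sketched above; everything else is standard convex geometry.
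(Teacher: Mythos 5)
Your proof is correct, but it follows a genuinely different route from the paper's. You establish the conclusion by exhibiting $y-x$ as the outward normal of a supporting hyperplane of $\bar{\Omega}$ at $y$: the open ball $B(x,|x-y|)$ sits inside $\Omega$ and is internally tangent at $y$, so any supporting hyperplane of $\bar{\Omega}$ at $y$ also supports the ball there, and your test-vector computation correctly pins its normal to the radial direction $(y-x)/|y-x|$ (the parabolic scaling $\delta \sim \sqrt{\epsilon}$ is exactly what kills any orthogonal component of $n$, and the antipodal point fixes the sign). The paper instead argues by contradiction without invoking the supporting hyperplane theorem: if some $z \in \Omega$ satisfied $\langle y-x, z\rangle > \langle y-x, y\rangle$, then for $t>1$ close to $1$ the point $(1-t)z+ty$ lies outside $\Omega^o$ (else convexity would force $y \in \Omega^o$) yet is strictly closer to $x$ than $y$ is, so the segment from $x$ to it meets $\partial\Omega$ at a point closer than $y$, contradicting minimality. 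Your version is more conceptual and isolates the reusable geometric fact that the nearest-point direction is a supporting normal, at the cost of importing the supporting hyperplane theorem and the normal-uniqueness computation for the ball; the paper's version is shorter and entirely self-contained, using only the definition of convexity and one quadratic expansion. Two minor points you should make explicit if you write this up: the statement is trivial when $y=x$ (both sides vanish), which is the only case where your ball degenerates; and you are using that for an open convex set $\partial\Omega = \partial\bar{\Omega}$, so that the supporting hyperplane theorem indeed applies at $y$.
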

\begin{proof}
Since $y \in \partial \Omega$ we immediately have
\begin{equation*}
\langle y-x, y \rangle \leq \sup_{z \in \Omega} \langle y-x, z \rangle.
\end{equation*}
Now suppose that $y$ does not obtain the supremum in \eqref{eq:y-xissptaty}. Then  there exists $z \in \Omega$ such that
\begin{equation*}
\langle y- x, y \rangle < \langle y-x ,z \rangle.
\end{equation*} 
Observe that for all $t >1$, the point $(1-t)z + ty \not \in \Omega^o$ (here $\Omega^o$ is the interior of $\Omega$), since otherwise we would have $y \in \Omega^o$ by virtue of $\Omega$ being convex.

\vsni
Next we assert that for all $t>1$ and small enough, the point $(1-t)z + ty$ is strictly closer to $x$ than $y$ is. Indeed,
\begin{align*}
|(1-t)z + ty - x|^2 &= |(1-t)z - (1-t)y - (x-y)|^2,\\
&= (1-t)^2|z-y|^2 +2(1-t) \langle z-y, y-x \rangle + |x-y|^2,\\
&= (t-1)((t-1)|z-y|^2 - 2 \langle z-y, y-x\rangle) + |x-y|^2.
\end{align*} 
Since $\langle z-y, y-x \rangle >0$, the assertion is proven. For $t>1$ and small enough there must therefore exist a point on the segment connecting $x$ and $(1-t)z + ty$ which is in $\partial \Omega$ (since $(1-t)z + ty \not \in \Omega^o$) and is closer to $x$ than $y$, contradicting the assumption that $y$ is the closest point to $x$ in $\partial \Omega$. 
\end{proof}
We can now establish the desired upper bound provided $\Omega$ is convex and $g$ is zero in a neighbourhood of the boundary of $\Omega$.
\begin{lemma}
\label{lem:distupperbound}
If $\Omega$ is convex and open, $f, g \in L^\infty(\Omega)$, and $g = 0$ in a neighbourhood of the boundary of $\Omega$, then there exists a constant $C$ such that for all $x \in \Omega$,
\begin{equation*}
\sigma(x) \leq C \dist(x, \partial \Omega).
\end{equation*}
\end{lemma}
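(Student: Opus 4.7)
The plan is to split on the magnitude of $\dist(x,\partial\Omega)$ and exploit Lemma \ref{lem:y-xissptaty} to control the small-distance case. Since $g$ vanishes in a neighborhood of $\partial\Omega$, fix $\delta>0$ so that $g\equiv 0$ on $\{y\in\Omega \mid \dist(y,\partial\Omega)<\delta\}$, and let $K:=\{y\in\Omega \mid \dist(y,\partial\Omega)\geq\delta\}$ denote the support of $g$. Write $r:=\dist(x,\partial\Omega)$. When $r\geq \delta/2$, Lemma \ref{lem:sigmaLinfinity} already yields $\sigma(x)\leq \norm{\sigma}_{L^\infty(\Omega)}\leq (2\norm{\sigma}_{L^\infty(\Omega)}/\delta)\,r$, and there is nothing more to do. So the real work is in the regime $r<\delta/2$.

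In this regime, let $y_0\in\partial\Omega$ realize $|y_0-x|=r$ and set $n:=(y_0-x)/r$, a unit vector. By Lemma \ref{lem:y-xissptaty}, $\langle n,z-y_0\rangle\leq 0$ for every $z\in\Omega$. The main geometric input, and the step I expect to be the principal obstacle, is the quantitative bound
\begin{equation*}
\langle n,\, y_0 - y\rangle \ \geq\ \delta \qquad \text{for every } y\in K.
\end{equation*}
This is the only place where the hypothesis that $g$ vanishes near $\partial\Omega$ enters quantitatively. I would prove it by noting that $y\in K$ forces $B_\delta(y)\subset\Omega$, so in particular $y+\delta n\in\Omega$, and applying the supporting hyperplane property at $y+\delta n$ gives exactly $\langle n,y+\delta n - y_0\rangle \leq 0$.

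To finish, I apply formula \eqref{eq:sigmaformula}. Writing $x=y_0-rn$, a direct computation shows that for $y\in K$ and $t\in[0,1)$,
\begin{equation*}
\left\langle n,\ \frac{x-ty}{1-t}-y_0\right\rangle \ =\ \frac{-r + t\,\langle n, y_0-y\rangle}{1-t}.
\end{equation*}
The supporting hyperplane condition therefore forces $f((x-ty)/(1-t))=0$ unless $t \leq r/\langle n, y_0-y\rangle \leq r/\delta$. Since $r/\delta<1/2$ in the case under consideration, on this admissible $t$-range we also have $(1-t)^{-d}\leq 2^d$. Substituting these two restrictions into \eqref{eq:sigmaformula} and using $\int_\Omega g=1$ gives
\begin{equation*}
\sigma(x) \ \leq\ \norm{f}_{L^\infty(\Omega)}\,2^d\int_0^{r/\delta}\int_K g(y)\,dy\,dt \ \leq\ \frac{2^d\,\norm{f}_{L^\infty(\Omega)}}{\delta}\,r.
\end{equation*}
Combining with the trivial estimate for $r\geq\delta/2$ produces a constant $C$, depending only on $\delta$, $\norm{f}_{L^\infty(\Omega)}$, $\norm{\sigma}_{L^\infty(\Omega)}$, and the dimension $d$, such that $\sigma(x)\leq C\,\dist(x,\partial\Omega)$ for all $x\in\Omega$.
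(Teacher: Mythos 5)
Your proof is correct and follows essentially the same route as the paper's: both use the supporting-hyperplane property of Lemma \ref{lem:y-xissptaty} at the nearest boundary point, both exploit the vanishing of $g$ near $\partial\Omega$ to force $\langle n, y_0-y\rangle$ to be bounded below on the support of $g$, both conclude that the $t$-integral in \eqref{eq:sigmaformula} is effectively restricted to an interval of length $O(\dist(x,\partial\Omega))$, and both dispatch the case $\dist(x,\partial\Omega)\gtrsim\delta$ via Lemma \ref{lem:sigmaLinfinity}. Your version is a mild streamlining (a fixed margin $\delta$ and the quantitative bound $\langle n, y_0-y\rangle\geq\delta$ in place of the paper's $x$-dependent half-space partition), but it is not a different argument.
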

The idea of the proof is to partition $\Omega$ into two sets by a certain half-plane, one set in which $g$ is zero, and another set in which the time integral in \eqref{eq:sigmaformula} can be restricted to a sub-interval $[0, t_0]$ where $t_0$ is related to $\dist(x, \partial \Omega)$. 
\begin{proof}
Let $x \in \Omega$, and let $y \in \partial \Omega$ such that
\begin{equation*}
|x-y| = \dist(x, \partial \Omega).
\end{equation*}
Set $\theta = :\langle y - x, y \rangle$ and note that by Lemma \ref{lem:y-xissptaty}, the condition $\langle y-x, z \rangle >\theta$ implies that $z \in \Omega^c$. For $\delta>0$ to be determined, set
\begin{equation*}
H^-(x) = \{ z \in \Omega \mid \langle y- x, z \rangle \leq \theta - \delta \}, H^+(x) = \{ z \in \Omega \mid \langle y- x, z \rangle > \theta - \delta \}.
\end{equation*}
Since $H^+(x)$ and $H^-(x)$ partition $\Omega$, we can write
\begin{equation*}
\sigma(x) = \int_{H^-(x)} \int_0^1 f\left(\frac{x-tz}{1-t}\right)g(z)(1-t)^{-d} dt dz + \int_{H^+(x)} \int_0^1 f\left(\frac{x-tz}{1-t}\right)g(z)(1-t)^{-d} dt dz.
\end{equation*}
Since $g = 0$ in a neighbourhood of $\partial \Omega$, there exists a constant $\delta_0$ such that if $g(z) >0$, then $\dist(z, \partial \Omega) > \delta_0$. We assert that the condition $\delta \leq \delta_0 \dist(x, \partial \Omega)$ then implies that $g(z) = 0$ on $H^+(x)$. Indeed, if $z \in H^+(x)$ then
\begin{equation*}
\langle z + \frac{\delta}{\dist(x, \partial \Omega)^2}(y-x), y-x \rangle > \theta-\delta +\delta = \theta. 
\end{equation*}
Hence $z + \frac{\delta}{\dist(x, \partial \Omega)^2}(y-x) \in \Omega^c$, indicating that
\begin{equation*}
\dist(z, \partial \Omega) \leq \frac{\delta}{\dist(x, \partial \Omega)} \leq \delta_0,
\end{equation*}
and hence $g(z) = 0$, proving the assertion. Therefore, if $\delta \leq \delta_0 \dist(x, \partial \Omega)$, we have
\begin{equation*}
\sigma(x) = \int_{H^-(x)} \int_0^1 f\left(\frac{x-tz}{1-t}\right)g(z)(1-t)^{-d} dt dz.
\end{equation*}
We next determine, given $z \in H^-(x)$, for what range of $t \in [0,1]$ we have
\begin{equation*}
\frac{x-tz}{1-t} \in \Omega.
\end{equation*}
Again we use Lemma \ref{lem:y-xissptaty}. For $z \in H^-(x)$ we compute
\begin{align*}
\langle \frac{x-tz}{1-t}, y-x \rangle &= \frac{1}{1-t}\langle x, y-x \rangle - \frac{t}{1-t} \langle z, y-x \rangle,\\
&\geq \frac{1}{1-t}(\theta - \dist(x, \partial \Omega)^2) - \frac{t}{1-t}(\theta - \delta),\\
&= \theta + \frac{1}{1-t}(t\delta - \dist(x, \partial \Omega)^2).
\end{align*}
Thus, if $t\in (\frac{\dist(x, \partial \Omega)^2}{\delta},1]$ and $z \in H^-(x)$, we have $\frac{x-tz}{1-t} \in \Omega^c$. 

\vsni
Under the assumption that $\dist(x, \partial \Omega) \leq \frac{\delta_0}{2}$, we now select
\begin{equation*}
\delta := \delta_0 \dist(x, \partial \Omega).
\end{equation*} 
With this choice, the interval $(\frac{\dist(x, \partial \Omega)^2}{\delta}, 1]$ is non-empty since $\frac{\dist(x, \partial \Omega)^2}{\delta} = \frac{\dist(x, \partial \Omega)}{\delta_0} \leq \frac{1}{2}$. Since $\delta \leq \delta_0 \dist(x, \partial \Omega)$, we use our work above to conclude that
\begin{align*}
\sigma(x) &= \int_{H^{-}(x)} \int_0^{\frac{\dist(x, \partial \Omega)}{\delta_0}} f \left(\frac{x-tz}{1-t}\right) g(z) (1-t)^{-d} dt dz,\\
&\leq \norm{f}_{L^\infty(\Omega)}\int_{H^{-}(x)} \int_0^{\frac{\dist(x, \partial \Omega)}{\delta_0}} (1-t)^{-d} g(z) dt dz.
\end{align*}
Noting that $(1-t)^{-d}$ is an increasing function on $[0,1)$ and $\frac{\dist(x, \partial \Omega)}{\delta_0} \leq \frac{1}{2}$, we get
\begin{align*}
\sigma(x) &\leq  \norm{f}_{L^\infty(\Omega)} \frac{2^d}{\delta_0}\dist(x, \partial \Omega).
\end{align*}
For $x$ with $\dist(x, \partial \Omega) > \frac{\delta_0}{2}$ we use Lemma \ref{lem:sigmaLinfinity} to conclude
\begin{align*}
\sigma(x) &\leq \norm{\sigma}_{L^\infty(\Omega)},\\
&\leq \frac{2\norm{\sigma}_{L^\infty(\Omega)}}{\delta_0} \dist(x, \partial \Omega).
\end{align*}
Hence, for all $x \in \Omega$,
\begin{equation*}
\sigma(x) \leq \max\left( \frac{2}{\delta_0}\norm{\sigma}_{L^\infty(\Omega)},  \frac{2^d}{\delta_0}\norm{f}_{L^\infty(\Omega)}\right) \dist(x, \partial \Omega).
\end{equation*}
\end{proof}
\subsection{Proofs of results from Section \ref{sec:proofofweaktheorem}}
\label{sec:proofsofresultsforweaktheorem}
\begin{proof}{\textbf{of Lemma \ref{lem:supGPlambdaisfinite}}}
For $u \in C^\infty(\overbar{\Omega})$, we have
\begin{align*}
|\langle u, g-f \rangle|&= |\int_\Omega u(x)(g(x) - f(x)) dx|,\\
&= |\int_\Omega \int_\Omega \int_0^1 \nabla u((1-t) x + ty) \cdot (y-x) dt f(x) g(y) dx dy|,\\
&\leq \diam(\Omega) \int_\Omega \int_\Omega \int_0^1 |\nabla u((1-t)x + ty)| f(x) g(y) dt dx dy,\\
&= \diam(\Omega) \int_\Omega |\nabla u(x)| \sigma(x)dx.
\end{align*}
Thus, via Cauchy-Schwarz, 
\begin{equation*}
|\langle u, g-f \rangle| \leq \diam(\Omega) \left(\int_\Omega |\nabla u|^2 \sigma(x) dx \right)^{1/2}.
\end{equation*}
By density of $C^\infty(\overbar{\Omega})$ in $H^1(\Omega)$, \eqref{eq:boundingugminusf} therefore holds for all $u \in H^1(\Omega)$. As such, for $u \in H^1(\Omega)$,
\begin{align*}
J(u, \nabla u) &\geq - \diam(\Omega) \left(\int_\Omega |\nabla u|^2 \sigma(x) dx \right)^{1/2} + \frac{\lambda}{2}\int_\Omega (|\nabla u| -1)_+^2 \sigma(x) dx,\\
&\geq - \diam (\Omega) \left(\int_\Omega (|\nabla u|-1)_+^2 \sigma(x) dx \right)^{1/2} + \frac{\lambda}{2}\int_\Omega (|\nabla u| -1)_+^2 \sigma(x) dx - C,\\
&= \frac{\lambda}{2}\left( \left(\int_\Omega (|\nabla u| -1)_+^2 \sigma(x) dx\right)^{1/2} - \frac{\diam(\Omega)}{\lambda}\right)^2 - C,\\
&> -C,
\end{align*}
where the second inequality is obtained by applying the triangle inequality twice. This proves that $\displaystyle{\inf_{u \in H^1(\Omega)} J(u, \nabla u)}$ (and hence $\sup \eqref{prob:GPlambda}$) is finite.   
\end{proof}

To established the duality result in Proposition \ref{prop:weakbetweenGPandBP}, we need some simple properties of the functional $J$, which we prove here.
\begin{lemma}
\label{lem:propertiesofweakJ}
The functional $J$ defined in \eqref{eq:Jdeff} is convex and continuous.
\end{lemma}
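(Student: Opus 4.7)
{\textbf{plan for Lemma \ref{lem:propertiesofweakJ}}}
The plan is to handle the two summands of $J$ separately, since the $\langle u, g-f\rangle$ piece depends only on $u \in H^1(\Omega)$ while the penalty piece depends only on $\pp \in L^2(\Omega;\RR^d)$.

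For \emph{convexity}, the map $u \mapsto \langle u, g-f \rangle$ is linear, hence convex. For the penalty, I would use the standard composition rule: the scalar function $a \mapsto (a-1)_+^2$ is convex and non-decreasing on $[0,+\infty)$, and $\pp \mapsto |\pp|$ is a convex function on $\RR^d$. Composing a convex, non-decreasing function with a convex function yields a convex function, so $\pp \mapsto (|\pp|-1)_+^2$ is convex pointwise. Since $\sigma \geq 0$, integrating against $\sigma\, dx$ preserves convexity, and $J$ is convex as the sum of two convex functionals.

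For \emph{continuity}, the linear term is continuous on $H^1(\Omega)$ since $f, g \in L^\infty(\Omega) \subset L^2(\Omega)$ and the $L^2$-inner product is continuous under the $H^1$ topology via $\norm{u}_{L^2} \leq \normusigma$ arguments or simply $|\langle u, g-f\rangle| \leq \norm{u}_{L^2(\Omega)}\norm{g-f}_{L^2(\Omega)}$. For the penalty term, the key ingredient is the elementary inequality
\begin{equation*}
\bigl|(a-1)_+^2 - (b-1)_+^2\bigr| \leq (a+b)\,|a-b|, \qquad a,b \geq 0,
\end{equation*}
which follows from factoring the difference of squares together with the fact that $t \mapsto (t-1)_+$ is $1$-Lipschitz and bounded above by $t$ on $[0,\infty)$. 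Combined with the reverse triangle inequality $\bigl||\pp_n|-|\pp|\bigr| \leq |\pp_n - \pp|$, this yields
\begin{equation*}
\int_\Omega \bigl|(|\pp_n|-1)_+^2 - (|\pp|-1)_+^2\bigr|\,\sigma\, dx \leq \norm{\sigma}_{L^\infty(\Omega)} \int_\Omega (|\pp_n| + |\pp|)\,|\pp_n - \pp|\, dx.
\end{equation*}
Applying Cauchy--Schwarz bounds the right-hand side by $\norm{\sigma}_{L^\infty(\Omega)}(\norm{\pp_n}_{L^2} + \norm{\pp}_{L^2})\norm{\pp_n-\pp}_{L^2}$, which tends to zero whenever $\pp_n \to \pp$ in $L^2(\Omega;\RR^d)$. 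Here the fact that $\sigma \in L^\infty(\Omega)$ (guaranteed by Lemma \ref{lem:sigmaLinfinity}) is crucial; without it the final estimate would not close.

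There is no serious obstacle: the only subtlety is invoking Lemma \ref{lem:sigmaLinfinity} to ensure $\sigma$ is bounded, so that the penalty term is both finite and jointly continuous on $H^1(\Omega) \times L^2(\Omega;\RR^d)$. Once this is observed, convexity and continuity follow from the standard estimates sketched above.
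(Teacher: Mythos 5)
Your proposal is correct and follows essentially the same route as the paper: the same splitting of $J$ into the linear term and the penalty term, the same composition-of-convex-functions argument for convexity of $\xi \mapsto (|\xi|-1)_+^2\sigma(x)$, and the same continuity estimate via the difference-of-squares factorization, the $1$-Lipschitz property of $t \mapsto (t-1)_+$, Cauchy--Schwarz, and the bound $\sigma \in L^\infty(\Omega)$ from Lemma \ref{lem:sigmaLinfinity}. No gaps.
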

\begin{proof}
Define $F: H^1(\Omega) \rightarrow \RR$ and $G : \bigLnosigma \rightarrow \RR$ as
\begin{equation*}
F(u) = \langle u, g-f \rangle, \quad G(\pp) = \frac{\lambda}{2}\int_\Omega (|\pp|-1)_+^2 \sigma(x) dx,
\end{equation*}
so that $J(u, \pp) = F(u) + G(\pp)$.
To see that $J$ is convex and continuous, we claim that $F$ and $G$ are both convex and continuous on their respective domains. For $F$ these claims are immediate since $F$ is a continuous linear operator. For $G$ they require proof. Note that $G$ can be written
\begin{equation*}
G(\mathbf{p}) = \int_\Omega h(x, \pp(x)) dx,
\end{equation*}
where $h: \Omega \times \RR^d \rightarrow \RR$ is given by
\begin{equation}
h(x, \xi) = \frac{\lambda}{2}(|\xi|-1)_+^2 \sigma(x).\label{eq:hdeff}
\end{equation}
Note that for fixed $x$, $h$ is convex in $\xi$ because it is the composition of a convex function $(\xi \mapsto |\xi|$) with a non-decreasing convex function ($z \mapsto \frac{\lambda}{2}(z-1)_+^2\sigma(x)$). Convexity of $h$ then implies convexity of $G$. 

\vsni
To show continuity of $G$, observe that for $\mathbf{p}_1, \mathbf{p}_2 \in L^2(\Omega;\RR^d)$,
\begin{align*}
G(\mathbf{p}_1) - G(\mathbf{p}_2) &= \frac{\lambda}{2}\int_\Omega (|\mathbf{p}_1| -1)_+ + (|\mathbf{p}_2| -1)_+)(|\mathbf{p}_1| -1)_+ - (|\mathbf{p}_2| -1)_+) \sigma(x) dx,\\
&\leq \frac{\lambda}{2}\norm{\sigma}_{L^\infty(\Omega)}(\norm{\mathbf{p}_1}_{\bigLnosigma} + \norm{\mathbf{p}_2}_{\bigLnosigma}) \\
&\quad \times\left(\int_\Omega\left((|\mathbf{p}_1|-1)_+ - (|\mathbf{p}_2|-1)_+\right)^2 dx \right)^{1/2},\\
&\leq \frac{\lambda}{2}\norm{\sigma}_{L^\infty(\Omega)}(\norm{\mathbf{p}_1}_{\bigLnosigma} + \norm{\mathbf{p}_2}_{\bigLnosigma}) \norm{\mathbf{p}_1-\mathbf{p}_2}_{\bigLnosigma},
\end{align*}
where in the last line we have used the fact that $\xi \mapsto (|\xi|-1)_+$ is $1$-Lipschitz. Swapping the roles of $\pp_1$ and $\pp_2$, we obtain that $G$ is continuous.
\end{proof}
The same proof, with minor modifications, establishes the following related result.
\begin{lemma}
The map $\tilde{G}:\bigL\rightarrow \RR$ given by
\begin{equation}
\tilde{G}(\mathbf{p}) = \frac{\lambda}{2}\int_\Omega(|\mathbf{p}|-1)_+^2 \sigma(x) dx\label{eq:tildeGdef}
\end{equation}
is continuous and convex. \label{lem:Gonweightedspaceconvexandcontinuous}
\end{lemma}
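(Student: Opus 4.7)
The plan is to imitate the proof of Lemma \ref{lem:propertiesofweakJ} with a single substantive change: replace the unweighted Cauchy--Schwarz estimate by a Cauchy--Schwarz estimate in $L^2(\Omega,\sigma)$. This makes the $L^\infty$ bound on $\sigma$ unnecessary for the continuity argument, which is fortunate since the $\bigL$ norm controls only weighted integrals of $|\pp|^2$.

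First I would establish convexity. Write $\tilde{G}(\pp) = \int_\Omega h(x,\pp(x))\,dx$ with $h$ exactly as in \eqref{eq:hdeff}. Since $h(x,\cdot)$ is the composition of the convex map $\xi \mapsto |\xi|$ with the non-decreasing convex map $z \mapsto \tfrac{\lambda}{2}(z-1)_+^2 \sigma(x)$, it is convex in $\xi$ for every fixed $x$, and convexity of $\tilde{G}$ follows by integrating the defining inequality.

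Next I would verify that $\tilde{G}$ is well defined and finite on $\bigL$. The pointwise inequality $(|\xi|-1)_+ \leq |\xi|$ gives $(|\pp|-1)_+^2 \sigma(x) \leq |\pp(x)|^2 \sigma(x)$, so $\tilde{G}(\pp) \leq \tfrac{\lambda}{2}\norm{\pp}_{\bigL}^2 < +\infty$. For continuity, given $\pp_1,\pp_2 \in \bigL$ I would factor the difference as
\begin{equation*}
\tilde{G}(\pp_1)-\tilde{G}(\pp_2) = \frac{\lambda}{2}\int_\Omega \bigl((|\pp_1|-1)_+ - (|\pp_2|-1)_+\bigr)\bigl((|\pp_1|-1)_+ + (|\pp_2|-1)_+\bigr)\sigma(x)\,dx,
\end{equation*}
then apply Cauchy--Schwarz in $L^2(\Omega,\sigma)$ to bound this by a product of two weighted $L^2$ norms. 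Using that $\xi \mapsto (|\xi|-1)_+$ is $1$-Lipschitz, the first factor is at most $\norm{\pp_1-\pp_2}_{\bigL}$, and using $(|\pp_i|-1)_+ \leq |\pp_i|$ the second factor is bounded by $\norm{\pp_1}_{\bigL} + \norm{\pp_2}_{\bigL}$. This gives the estimate
\begin{equation*}
\bigl|\tilde{G}(\pp_1)-\tilde{G}(\pp_2)\bigr| \leq \frac{\lambda}{2}\bigl(\norm{\pp_1}_{\bigL} + \norm{\pp_2}_{\bigL}\bigr)\norm{\pp_1-\pp_2}_{\bigL},
\end{equation*}
which implies continuity on $\bigL$.

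There is no real obstacle here; the adaptation from Lemma \ref{lem:propertiesofweakJ} is essentially notational. The only point requiring attention is making sure that every integral is taken with respect to $\sigma(x)\,dx$ rather than $dx$, so that one never needs a uniform bound on $\sigma$. This is precisely why the weighted space is more natural for $\tilde{G}$ than the unweighted one is for $G$.
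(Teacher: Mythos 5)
Your proof is correct and matches the paper's intent exactly: the paper itself only says the result follows from the proof of Lemma \ref{lem:propertiesofweakJ} ``with minor modifications,'' and the modification you identify --- applying Cauchy--Schwarz in $L^2(\Omega,\sigma)$ rather than pulling out $\norm{\sigma}_{L^\infty(\Omega)}$ and using the unweighted estimate --- is precisely the right one, and you execute it correctly.
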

\begin{proof}{\textbf{of Proposition \ref{prop:weakbetweenGPandBP}}}
We apply Theorem 4.1 of Chapter 3 of \cite{ekeland1999convex}, which states that if
\begin{enumerate}[i.]
\item $J$ is convex,
\item $\inf_{u \in H^1(\Omega)} J(u, \nabla u)$ is finite, and
\item there exists $u_0 \in H^1(\Omega)$ such that $J(u_0, \nabla u_0) < \infty$ with the function $\pp \mapsto J(u_0, \pp)$ being continuous at $\nabla u_0$,
\end{enumerate}
then with $J^*$ as the Legendre dual of $J$,
\begin{equation*}
\sup_{\ww \in \bigLnosigma} - J^*(\nabla^* \ww, -\ww) = \inf_{u \in H^1(\Omega)} J(u, \nabla u),
\end{equation*}
and the problem on the left hand side has at least one solution. By Lemma \ref{lem:dualcalc}, this is equivalent to \eqref{eq:weakGP=BP} and existence of a solution to \eqref{prob:BPlambda}. Note that this solution must be unique by strict convexity of the functional in $\eqref{prob:BPlambda}$. Thus if we can verify points i - iii we are done, and these are shown in Lemmas \ref{lem:supGPlambdaisfinite} and \ref{lem:propertiesofweakJ}.
\end{proof}

\begin{lemma}
Identifying $\bigLnosigma$ as its own dual space, we have that 
\begin{equation*}
J^*(\nabla^* \ww, -\ww) = \begin{cases}
\int_\Omega H(x, |\ww|(x)) dx &\quad \text{ if } \quad \nabla \cdot \ww = f-g,\\
+\infty &\quad \text{ else}.
\end{cases}
\end{equation*}
\label{lem:dualcalc}
\end{lemma}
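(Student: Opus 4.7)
\textbf{sketch (plan)}
The plan is to exploit the separable structure $J(u,\pp)=F(u)+G(\pp)$ established in Lemma \ref{lem:propertiesofweakJ}, which immediately gives the splitting
\begin{equation*}
J^*(\nabla^*\ww,-\ww)=F^*(\nabla^*\ww)+G^*(-\ww),
\end{equation*}
and then to compute each conjugate separately. The conjugates are taken with respect to the duality pairings on $H^1(\Omega)$ and $\bigLnosigma$, with the latter identified with its dual via the $L^2$ inner product.

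First I would handle $F^*(\nabla^*\ww)$. By definition of the adjoint, $\langle\nabla^*\ww,u\rangle_{H^1(\Omega)^*,H^1(\Omega)}=\int_\Omega\ww\cdot\nabla u\,dx$, so
\begin{equation*}
F^*(\nabla^*\ww)=\sup_{u\in H^1(\Omega)}\Bigl(\int_\Omega\ww\cdot\nabla u\,dx-\langle u,g-f\rangle\Bigr).
\end{equation*}
Since the map inside the supremum is linear in $u$, it is either identically zero or unbounded by rescaling $u\mapsto tu$. Zero for all $u\in H^1(\Omega)$ is precisely the weak statement $\nabla\cdot\ww=f-g$ with no-flux on $\partial\Omega$ (the weak divergence condition used in \eqref{prob:BPlambda}), so $F^*(\nabla^*\ww)=0$ in that case and $+\infty$ otherwise.

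Next I would compute $G^*(-\ww)$. Since $G(\pp)=\int_\Omega h(x,\pp(x))\,dx$ with $h$ from \eqref{eq:hdeff}, the standard Rockafellar interchange theorem for integral functionals on $L^2$ yields
\begin{equation*}
G^*(-\ww)=\int_\Omega h^*(x,-\ww(x))\,dx,
\end{equation*}
where $h^*(x,\cdot)$ is the pointwise Legendre dual of $h(x,\cdot)$. For fixed $x$ with $\sigma(x)>0$ and $\eta\in\RR^d$, one reduces the sup to $\xi=r\eta/|\eta|$ with $r\geq0$, getting $h^*(x,\eta)=\sup_{r\geq0}[r|\eta|-\tfrac{\lambda\sigma(x)}{2}(r-1)_+^2]$; on $r\leq 1$ the max is $|\eta|$ at $r=1$, while on $r>1$ differentiation gives $r^*=1+|\eta|/(\lambda\sigma(x))$ with value $|\eta|+|\eta|^2/(2\lambda\sigma(x))$, so $h^*(x,\eta)=H(x,|\eta|)$. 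For $\sigma(x)=0$: if $\eta\neq0$, sending $\xi=t\eta\to\infty$ gives $h^*(x,\eta)=+\infty$, while $\eta=0$ gives $h^*(x,0)=0$, matching the definition \eqref{eq:Hdeff} of $H$. Setting $\eta=-\ww(x)$ yields $G^*(-\ww)=\int_\Omega H(x,|\ww|(x))\,dx$, and combining with Step~1 completes the proof.

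The main obstacle will be justifying the interchange of supremum and integral defining $G^*(-\ww)$. The inequality $G^*(-\ww)\leq\int h^*(x,-\ww)\,dx$ is immediate from the pointwise Fenchel inequality. For the reverse inequality, I would exhibit an almost-maximizing measurable selection $\pp_n(x)=-r_n(x)\ww(x)/|\ww(x)|$ with $r_n$ truncating the optimal $r^*$ on $\{\sigma\geq1/n\}$ and growing on $\{\sigma=0,\ww\neq0\}$ to cover the $+\infty$ case, verifying $\pp_n\in\bigLnosigma$ and applying monotone convergence. Once this is in place, the rest is bookkeeping.
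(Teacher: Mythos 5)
Your proposal is correct and follows essentially the same route as the paper: split $J^*=F^*+G^*$, identify $F^*(\nabla^*\ww)$ with the indicator of the weak divergence constraint $\nabla\cdot\ww=f-g$, and compute $G^*$ by passing the conjugate inside the integral to get the pointwise dual $h^*(x,\eta)=H(x,|\eta|)$. The only difference is that the paper justifies the interchange by citing the normal-integrand result of Ekeland--Temam (verifying that $h$ is Carath\'eodory and that $G$ is finite at $\pp=0$) rather than constructing an almost-maximizing measurable selection by hand, but both justifications are standard and valid.
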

\begin{proof}{\textbf{of Lemma \ref{lem:dualcalc}}}
We have
\begin{equation*}
J^*(u^*, \mathbf{p}^*) = F^*(u^*) + G^*(\mathbf{p}^*).
\end{equation*}
Since $F$ is linear, the definition of the Legendre dual gives that
\begin{equation*}
F^*(u^*) = \begin{cases}
0 &\quad  u^* = \langle \cdot, g-f \rangle,\\
+\infty &\quad \text{ else,}
\end{cases} = 1_{g-f}(u^*).
\end{equation*}
To calculate $G^*(\mathbf{p}^*)$, we use Proposition 2.1 from Chapter 9 of \cite{ekeland1999convex}, which relies on the measurable selection theorem. In this context we write
\begin{equation*}
G(\mathbf{p}) = \int_\Omega h(x, \mathbf{p}(x)) dx,
\end{equation*}
where $h$ is as in \eqref{eq:hdeff}. We verify the hypotheses of this proposition, which are
\begin{enumerate}[i.]
\item that $\Omega$ is a bounded open subset of $\RR^d$
\item that $h$ is a non-negative normal integrand (see Definition 1.1, Chapter 8, \cite{ekeland1999convex}), and
\item that there exists $\mathbf{p} \in L^\infty(\Omega;\RR^d)$ with $G(\mathbf{p}) < + \infty$.
\end{enumerate}
The third point is clear for $\mathbf{p} = 0$. The second point follows because $h$ is a Carath{\'e}odory function, which are proven to be normal integrands in Proposition 1.1, Chapter 8, \cite{ekeland1999convex}; recall that a function $h: \Omega \times \RR^d \rightarrow \RR$ is said to be a Carath{\'e}odory function if 
\begin{itemize}
\item for almost all $x \in \Omega$, $h(x, \cdot)$ is continuous on $\RR^d$, and
\item for almost all $\xi \in \RR^d$, $h(\cdot, \xi)$ is measurable on $\Omega$.
\end{itemize}
Both of these clearly hold for $h$ given in \eqref{eq:hdeff}. The conclusion of Proposition 2.1 from Chapter 9 of \cite{ekeland1999convex} is that 
\begin{equation*}
G^*(\mathbf{p}^*) = \int_\Omega h^*(x, \mathbf{p}^*(x)) dx, 
\end{equation*}
for $h^*$ the partial dual
\begin{equation*}
h^*(x, \xi^*) = \sup_{\xi \in \RR^d} \xi^* \cdot \xi - h(x, \xi).
\end{equation*}
An elementary calculation gives that
\begin{equation*}
h^*(x, \xi^*) = H(x, |\xi^*|),
\end{equation*}
where $H$ is as in \eqref{eq:Hdeff}. Relabelling the variable as $\mathbf{w}$ for consistency, we have
\begin{equation*}
J^*(\nabla^* \mathbf{w}, -\mathbf{w}) =  1_{g-f}(\nabla^* \mathbf{w}) + \int_\Omega H(x, |\ww|(x)) dx. 
\end{equation*}
An infinite value is obtained for the first term unless $\ww$ is such that
\begin{equation*}
\int_\Omega \ww \cdot \nabla u dx = \int_\Omega u (g-f) dx
\end{equation*}
for all $u \in H^1(\Omega)$. Writing this requirement as $\nabla \cdot \ww = f-g$, the proof is complete.
\end{proof}

\begin{proof}{\textbf{of Proposition \ref{prop:weakbetweenBPandCP}}}
Here we will follow the strategy outlined in the sketch of the proof in the main text. Let $Q$ be admissible for \eqref{prob:CPlambda}. It is known (see \cite{santambrogio2015optimal}, Section 4.2.3) that
\begin{equation}
|\mathbf{w}_Q| \leq i_Q, \label{eq:averagevelocitylessthanaveragespeed}
\end{equation}
where $|\mathbf{w}_Q|$ is the total variation measure of $\mathbf{w}_Q$. Then by \eqref{eq:averagevelocitylessthanaveragespeed} and the definition of \eqref{prob:CPlambda} we know that $|\mathbf{w}_Q| \ll \mathcal{L}_d$ and $\mathbf{w}_Q \in \bigLnosigma$. Take $u \in C^\infty(\overbar{\Omega})$, and observe that
\begin{align*}
\int_\Omega \nabla u\cdot\mathbf{w}_Q dx &= \int_\mathcal{C}\int_0^1 \nabla u(\omega(t)) \cdot \omega'(t) dt dQ,\\
&= \int_\mathcal{C}(u(\omega(1)) - u(\omega(0))) dQ,\\
&= \int_\Omega u (g -f) dx, 
\end{align*}
where in the last line we have used the definition of $\mathcal{Q}(\mu,\nu)$. Thus, by density of $C^\infty(\overbar{\Omega})$ in $H^1(\Omega)$, we have that for all $u\in H^1(\Omega)$,
\begin{equation*}
\int_\Omega\nabla u\cdot\mathbf{w}_Q dx = \langle u, g-f \rangle. 
\end{equation*}
Hence $\mathbf{w}_Q$ is admissible in \eqref{prob:BPlambda}. Moreover, \eqref{eq:averagevelocitylessthanaveragespeed} together with the monotonicity of the integrand in \eqref{prob:CPlambda} gives
\begin{equation*}
\int_\Omega H(x, i_Q(x)) dx \geq \int_\Omega H(x, |\ww_Q|(x)) dx.
\end{equation*}
This establishes $\inf \eqref{prob:CPlambda} \geq \inf \eqref{prob:BPlambda}$. Now let $\mathbf{w}_0$ be a solution to \eqref{prob:BPlambda}. Since $C^\infty(\overbar{\Omega}) \subset H^1(\Omega)$, we have
\begin{equation*}
\int_\Omega \nabla u \cdot \mathbf{w}_0 dx = \int_\Omega u (g-f) dx
\end{equation*}
for all $u \in C^\infty(\overbar{\Omega})$, which is the hypothesis for Theorem 4.10 of \cite{santambrogio2015optimal}. As such, there exists $Q_0 \in \mathcal{P}(\mathcal{C})$ such that $(e_0)_\# Q_0 = \mu, (e_1)_\# Q_0 = \nu$, and
\begin{equation}
i_{Q_0} = |\mathbf{w}_{Q_0}| \leq |\mathbf{w}_0|.\label{eq:averagespeedequaltoaveragevelocity}
\end{equation}
Hence $i_{Q_0} \ll \mathcal{L}_d$ and $i_{Q_0} \in L^2(\Omega)$, so $Q_0$ is admissible for \eqref{prob:CPlambda}. Further, \eqref{eq:averagespeedequaltoaveragevelocity} implies that
\begin{equation*}
\int_\Omega H(x, i_{Q_0}(x)) dx \leq \int_\Omega H(x, |\ww_0|(x)) dx, 
\end{equation*}
and thus $\inf \eqref{prob:CPlambda} \leq \inf \eqref{prob:BPlambda}$, establishing \eqref{eq:equalityBPCP} and that $Q_0$ is a solution to \eqref{prob:CPlambda}.

To prove the final claim, let $Q_0$ be optimal in \eqref{prob:CPlambda}. Then the inequality \eqref{eq:averagevelocitylessthanaveragespeed}, together with \eqref{eq:equalityBPCP}, shows that $\mathbf{w}_{Q_0}$ is optimal for $(BP_\lambda)$. Recalling from Proposition \ref{prop:weakbetweenGPandBP} that $(BP_\lambda)$ has a unique minimizer, we get the first equality of \eqref{eq:averagevelocityissolutiontobeckmann}. To get the second equality we note that it is implied by optimality of $\ww_{Q_0}$ together with \eqref{eq:equalityBPCP} and \eqref{eq:averagevelocitylessthanaveragespeed}.
\end{proof}
\begin{proof}{\textbf{of Lemma \ref{lem:wassersteinonestrictlylessthanCT}}}
By Proposition \ref{prop:weakbetweenGPandBP} and since $\sigma \in L^\infty(\Omega)$, it is clear that
\begin{align}
\sup \eqref{prob:GPlambda} &\geq \inf \{ \int_\Omega \frac{1}{2\lambda\norm{\sigma}_{L^\infty(\Omega)}}|\ww|^2(x) dx \mid \ww \in L^2(\Omega; \RR^d), \nabla \cdot \ww = \mu - \nu\}\nonumber \\
&\quad + \inf\{ \int_\Omega |\ww|(x) dx \mid \ww \in L^2(\Omega; \RR^d), \nabla \cdot \ww = \mu - \nu\}. \label{prob:standardBeckmannwithL2constraint}
\end{align}
Let us analyse the second problem. Following Chapter 4 of \cite{santambrogio2015optimal}, write $\mathcal{M}_\text{div}^d(\Omega)$ as the set of vector measures with divergence which is a scalar measure. It is clear that
\begin{equation*}
\inf\{ \int_\Omega |\ww|(x) dx \mid \ww \in L^2(\Omega; \RR^d), \nabla \cdot \ww = \mu - \nu\} \geq \inf\{ |\ww|(\Omega) \mid \ww \in \mathcal{M}_\text{div}^d(\Omega), \nabla \cdot \ww = \mu - \nu\},
\end{equation*}
where $|\ww|$ is the total variation measure of $\ww$, and it is well known (see Chapter 4 of \cite{santambrogio2015optimal} again) that the right hand side is $W_1(\mu,\nu)$. We use Cauchy-Schwarz in \eqref{prob:standardBeckmannwithL2constraint} to get
\begin{equation*}
\sup \eqref{prob:GPlambda} \geq \frac{1}{2\lambda\norm{\sigma}_{L^\infty(\Omega)} \text{Vol}(\Omega)}  W_1(\mu,\nu)^2 + W_1(\mu,\nu),
\end{equation*}
whence the conclusion follows immediately.
\end{proof}
Lemma \ref{lem:funccoercedbygrad} provides a lower bound on a function $\varphi$ that is used to prove Proposition \ref{prop:globalestimate}.
\begin{lemma}
For $\xi^*\in \RR^d$, define $\varphi: \Omega\times \RR^d \rightarrow \RR \cup \{+\infty\}$ by
\begin{equation*}
\varphi(x, \xi)= \xi^* \cdot \xi + \frac{\lambda \sigma(x)}{2}(|\xi| -1)_+^2 +H(x, |\xi^*|).
\end{equation*}
For all $(x,\xi) \in \Omega \times \RR^d$ we have
\begin{equation}
\varphi(x, \xi) \geq  \frac{1}{2\lambda \norm{\sigma}_{L^\infty(\Omega)}}|\xi^* + \lambda \sigma(x) (|\xi|-1)_+\frac{\xi}{|\xi|}|^2.\label{eq:funccoercedbygrad}
\end{equation}\label{lem:funccoercedbygrad}
\end{lemma}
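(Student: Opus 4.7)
The plan is to expand the right-hand side of \eqref{eq:funccoercedbygrad}, use the explicit form of $H(x, |\xi^*|)$ from \eqref{eq:Hdeff}, and reduce the inequality to a simple consequence of Cauchy--Schwarz. First I would focus on the case $\sigma(x) > 0$, where the formula $H(x, |\xi^*|) = \frac{|\xi^*|^2}{2\lambda\sigma(x)} + |\xi^*|$ applies. Expanding the square on the right gives
\begin{equation*}
\frac{1}{2\lambda\sigma(x)}\Bigl|\xi^* + \lambda\sigma(x)(|\xi|-1)_+\tfrac{\xi}{|\xi|}\Bigr|^2 = \frac{|\xi^*|^2}{2\lambda\sigma(x)} + (|\xi|-1)_+\, \xi^* \cdot \tfrac{\xi}{|\xi|} + \frac{\lambda\sigma(x)}{2}(|\xi|-1)_+^2.
\end{equation*}
Substituting this and the formula for $H(x, |\xi^*|)$ into \eqref{eq:funccoercedbygrad}, the terms $\frac{|\xi^*|^2}{2\lambda\sigma(x)}$ and $\frac{\lambda\sigma(x)}{2}(|\xi|-1)_+^2$ cancel, so the inequality reduces to
\begin{equation*}
\xi^* \cdot \xi + |\xi^*| \geq (|\xi|-1)_+\, \xi^* \cdot \tfrac{\xi}{|\xi|}.
\end{equation*}

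Next I would verify this reduced inequality by splitting into two cases. When $|\xi| \leq 1$, the right-hand side is zero, and Cauchy--Schwarz gives $\xi^* \cdot \xi \geq -|\xi^*||\xi| \geq -|\xi^*|$, which rearranges to the desired bound. When $|\xi| > 1$, we have $(|\xi|-1)_+ = |\xi|-1$, so the right-hand side equals $\xi^* \cdot \xi - \xi^* \cdot \tfrac{\xi}{|\xi|}$; the inequality then reduces to $|\xi^*| + \xi^* \cdot \tfrac{\xi}{|\xi|} \geq 0$, which again follows from Cauchy--Schwarz applied to the unit vector $\xi/|\xi|$.

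Finally, I would handle the degenerate case $\sigma(x) = 0$. In this case $H(x, |\xi^*|) = +\infty$ unless $\xi^* = 0$, so whenever $\xi^* \neq 0$ the left-hand side of \eqref{eq:funccoercedbygrad} is $+\infty$ and the inequality is trivial (interpreting the right-hand side in the usual extended-real sense, or restricting to $\sigma(x) > 0$ where only the latter is relevant for the integration in Proposition \ref{prop:globalestimate}). If $\xi^* = 0$ as well, both sides vanish.

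I do not expect any serious obstacle here: the lemma is essentially a completion of the square, and once the cancellation is carried out the remaining inequality is elementary. The only mild subtlety is bookkeeping the $\sigma(x) = 0$ case, but it is resolved by the convention in \eqref{eq:Hdeff} which makes $H$ infinite there unless $\xi^* = 0$.
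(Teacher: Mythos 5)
Your proof is correct and is essentially the same argument as the paper's: expanding the square on the right and cancelling is just the paper's completion-of-the-square read in reverse, and in both cases ($|\xi|\le 1$ and $|\xi|>1$) you discard exactly the same nonnegative remainder $\xi^*\cdot\xi+|\xi^*|$ resp. $\xi^*\cdot\tfrac{\xi}{|\xi|}+|\xi^*|$ via Cauchy--Schwarz. Your explicit treatment of the degenerate case $\sigma(x)=0$ is a small addition the paper leaves implicit, but it does not change the substance of the proof.
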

\begin{proof}
Note that if $\sigma(x) = 0$ and $\xi^* \neq 0$, $\varphi(x, \xi) = +\infty$, and so \eqref{eq:funccoercedbygrad} holds. Further, if $\sigma(x) = 0$ and $\xi^* = 0$, $\varphi(x, \xi) = 0$, and \eqref{eq:funccoercedbygrad} holds. We therefore proceed assuming that $\sigma(x) \neq 0$. Suppose $|\xi| \geq 1$. Through elementary manipulations one can show that
\begin{equation*}
\varphi(x, \xi) = \frac{\lambda\sigma(x)}{2}|\xi - \frac{\xi}{|\xi|} + \frac{\xi^*}{\lambda \sigma(x)}|^2 + \xi^* \cdot \frac{\xi}{|\xi|} + |\xi^*|.
\end{equation*}
Applying the Cauchy-Schwarz inequality,
\begin{align*}
\varphi(x, \xi) &\geq \frac{1}{2\lambda\sigma(x)}|\xi^* + \lambda \sigma(x)(|\xi|-1)_+\frac{\xi}{|\xi|}|^2.
\end{align*}
For $|\xi| \leq 1$, 
\begin{align*}
\varphi(x, \xi) &= \xi^* \cdot \xi + \frac{1}{2\lambda \sigma(x)}|\xi^*|^2 + |\xi^*|,\\
&\geq \frac{1}{2\lambda \sigma(x)}|\xi^*|^2,\\
&= \frac{1}{2\lambda\sigma(x)}|\xi^* + \lambda \sigma(x)(|\xi|-1)_+\frac{\xi}{|\xi|}|^2.
\end{align*}
Noting that
\begin{equation*}
\frac{1}{\sigma(x)} \geq \frac{1}{\norm{\sigma}_{L^\infty(\Omega)}},
\end{equation*}
we obtain the inequality \eqref{eq:funccoercedbygrad}.
\end{proof}

\begin{proof}{\textbf{of Proposition \ref{prop:globalestimate}}}
For $\xi^* = \ww_0(x)$, $\xi = \nabla u(x)$, the inequality \eqref{eq:funccoercedbygrad} gives
\begin{align*}
\nabla u(x) \cdot \ww_0(x) + &\frac{\lambda}{2}(|\nabla u|(x) -1)_+^2 \sigma(x) + H(x, |\ww_0|(x)) \\ &\geq \frac{1}{2\lambda \norm{\sigma}_{L^\infty(\Omega)}}|\ww_0(x) + \lambda \sigma(x) (|\nabla u(x)|-1)_+ \frac{\nabla u(x)}{|\nabla u(x)|}|^2.
\end{align*}
Integrating this over $\Omega$ and using the equality \eqref{eq:weakGP=BP} together with $\nabla \cdot \ww_0 = f-g$, we get
\begin{equation*}
J(u, \nabla u) + \sup\eqref{prob:GPlambda} \geq \int_\Omega \frac{1}{2\lambda \norm{\sigma}_{L^\infty(\Omega)}}|\ww_0(x) + \lambda \sigma(x) (|\nabla u(x)|-1)_+ \frac{\nabla u(x)}{|\nabla u(x)|}|^2 dx,
\end{equation*}
which is \eqref{eq:globalestimate}.
\end{proof}
\subsection{\textbf{Proofs of results from Section \ref{sec:proofofstrongtheorem}}}
\label{sec:proofsofresultsforstrongtheorem}
We start by showing that $\sigma$ being comparable to $\dist(x, \partial \Omega)$ implies that the weighted Sobolev space $H^1(\Omega, \sigma)$ has a Poincar{\'e} inequality (Lemma \ref{lem:Poincareinequality}) and that $C^\infty(\overbar{\Omega})$ is dense in $H^1(\Omega,\sigma)$ (Lemma \ref{lem:densityofsmoothfunctions}). Together these lemmas prove Lemma \ref{lem:propertiesofH1sigma}.
\begin{lemma}
Under the hypotheses of Theorem \ref{thm:strongtheorem} \eqref{eq:Poincareinequality} holds for all $u \in H^1(\Omega,\sigma)$. \label{lem:Poincareinequality}
\end{lemma}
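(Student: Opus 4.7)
The plan is to reduce the weighted Poincar\'e inequality for $\sigma$ to the corresponding one for the reference weight $\rho(x) := \dist(x, \partial \Omega)$, using the two-sided bound $C^{-1} \rho \leq \sigma \leq C \rho$ furnished by Proposition \ref{prop:comparabletodistance3}. Note that this bound immediately gives an identification of the normed spaces $H^1(\Omega, \sigma) = H^1(\Omega, \rho)$ with equivalent norms, so nothing is lost by passing to $\rho$.

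First, I would invoke the classical weighted Poincar\'e inequality on bounded Lipschitz domains (convexity of $\Omega$ suffices) with distance-to-boundary weight: there exists a constant $C_1$ such that for every $u \in H^1(\Omega, \rho)$,
\[
\int_\Omega (u - (u)_\rho)^2 \rho(x)\, dx \leq C_1 \int_\Omega |\nabla u|^2 \rho(x)\, dx,
\]
where $(u)_\rho := \int_\Omega u \rho\, dx / \int_\Omega \rho\, dx$. This is classical for non-negative powers of the distance-to-boundary weight on Lipschitz (more generally John) domains, and I would cite \cite{kufner1985weighted} for a precise statement.

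Second, I would transfer this to $\sigma$. Since $\sigma$ is a probability density (being the pushforward of a probability measure under $\pi$), $(u)_\sigma$ is precisely the $L^2(\sigma)$-best constant approximation, i.e.\ the minimizer of $c \mapsto \int_\Omega (u-c)^2 \sigma\, dx$. Hence
\[
\int_\Omega (u - (u)_\sigma)^2 \sigma(x)\, dx \leq \int_\Omega (u - (u)_\rho)^2 \sigma(x)\, dx \leq C \int_\Omega (u - (u)_\rho)^2 \rho(x)\, dx,
\]
where the second step uses $\sigma \leq C \rho$. Applying the Poincar\'e inequality for $\rho$ to the right-hand side and then using $\rho \leq C \sigma$ on the gradient integral yields the claim with some constant depending on $C$, $C_1$, and $\Omega$.

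The main obstacle is citing a reference Poincar\'e inequality that applies to the weight $\rho = \dist(\cdot, \partial \Omega)$, which is borderline for the Muckenhoupt $A_2$ class; however, since $\Omega$ is convex (hence Lipschitz) and the weight is a non-negative power of the distance, this falls within a well-studied setting. A minor secondary point is verifying that $(u)_\sigma$ is well-defined for $u \in H^1(\Omega, \sigma)$, but this follows from Cauchy--Schwarz together with $\sigma$ being a probability density on a bounded domain.
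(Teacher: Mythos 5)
Your proof is correct and follows essentially the same route as the paper's: reduce to the weight $\rho(x)=\dist(x,\partial\Omega)$ via Proposition \ref{prop:comparabletodistance3}, invoke the known Poincar\'e inequality for that weight (the paper cites Remark 5.3 of \cite{edmunds1993weighted} rather than \cite{kufner1985weighted}, but this is immaterial), and transfer back using the fact that $(u)_\sigma$ is the $L^2(\Omega,\sigma)$-optimal constant. No gaps.
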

\begin{proof}
Take $\rho(x) = \dist(x, \partial \Omega)$. Then Remark 5.3 of \cite{edmunds1993weighted} gives that the space $H^1(\Omega,\rho)$ has a Poincar{\'e} inequality. Note that Proposition \ref{prop:comparabletodistance3} guarantees that $H^1(\Omega, \sigma) = H^1(\Omega,\rho)$ as sets, and hence we have 
\begin{equation}
\int_\Omega (u(x) - (u)_\rho)^2 \rho(x)dx \leq C \int_\Omega |\nabla u(x)|^2\rho(x) dx\label{eq:generalPoincareinequality}
\end{equation}
for all $u \in H^1(\Omega,\sigma)$, where $(u)_\rho = \frac{\int_\Omega u(x) \rho(x) dx }{\int_\Omega \rho(x) dx}$. Now observe that
\begin{equation*}
\norm{u-(u)_\sigma}_{L^2(\Omega,\sigma)} = \min_{c \in \RR} \norm{u-c}_{L^2(\Omega,\sigma)}.
\end{equation*}  
Indeed, the right hand side is a $1$-D optimization problem with optimality condition
\begin{equation*}
c = \frac{\int_\Omega u(x) \sigma(x) dx}{\int_\Omega \sigma(x)},
\end{equation*}
the denominator being finite since $\sigma$ is a probability distribution. As such,
\begin{align*}
\norm{u-(u)_\sigma}_{L^2(\Omega,\sigma)}^2 &\leq \norm{u - (u)_\rho}^2_{L^2(\Omega, \sigma)},\\
&\leq C\norm{u - (u)_\rho}^2_{L^2(\Omega,\rho)},\\
&\leq C \norm{\nabla u}^2_{L^2(\Omega,\rho; \RR^d)},\\
&\leq C \norm{\nabla u}^2_{L^2(\Omega,\sigma; \RR^d)}.
\end{align*}
In the second line we used Proposition \ref{prop:comparabletodistance3}, in the third we used the Poincar{\'e} inequality for $H^1(\Omega,\rho)$, and in the fourth we used Proposition \ref{prop:comparabletodistance3} again. 
\end{proof}
\begin{lemma}
Under the hypotheses of Theorem \ref{thm:strongtheorem} the space $C^\infty(\overbar{\Omega})$ is dense in $H^1(\Omega,\sigma)$.
\label{lem:densityofsmoothfunctions}
\end{lemma}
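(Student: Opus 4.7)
The plan is to reduce the density problem for the weight $\sigma$ to the same problem for the cleaner weight $\rho(x) := \dist(x,\partial\Omega)$, and then adapt a standard approximation argument for weighted Sobolev spaces with distance-type weights. By Proposition \ref{prop:comparabletodistance3}, $\sigma$ is comparable to $\rho$, so $H^1(\Omega,\sigma)$ and $H^1(\Omega,\rho)$ coincide as sets with equivalent norms. It therefore suffices to prove that $C^\infty(\bar\Omega)$ is dense in $H^1(\Omega,\rho)$.

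Given $u \in H^1(\Omega,\rho)$, I would approximate it in two stages. First, on any compact subset of $\Omega$ the weight $\rho$ is bounded above and bounded away from zero, so a truncation-plus-mollification argument of Meyers--Serrin type produces a sequence in $C^\infty(\Omega) \cap H^1(\Omega,\rho)$ converging to $u$. Second, to obtain smoothness up to $\partial\Omega$, I would localize via a finite partition of unity subordinate to a cover of $\bar\Omega$ consisting of one interior open set and finitely many boundary patches in which $\partial\Omega$ is the graph of a Lipschitz function. In each boundary patch, I would translate the localized piece by a small amount $\epsilon$ along a uniform cone direction pointing into $\Omega$ (which exists because $\partial\Omega$ is Lipschitz), and then mollify at a scale much smaller than $\epsilon$; the resulting local pieces lie in $C^\infty(\bar\Omega)$, and summing them via the partition of unity gives a global smooth approximation.

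The main obstacle is controlling continuity of translation in the weighted norm, i.e.\ showing that $\norm{u(\cdot+\epsilon\nu) - u}_{H^1(\Omega,\rho)} \to 0$ as $\epsilon\to 0^+$. The saving grace is that $\rho$ is itself $1$-Lipschitz and that an inward translation satisfies $\rho(x+\epsilon\nu) \geq c\,\rho(x)$ for some $c>0$ uniform in small $\epsilon$ on the relevant support, so the translated weight is sandwiched between constant multiples of $\rho$. An $\epsilon/3$ argument based on density of $C_c(\Omega)$ in $L^2(\Omega,\rho)$ then yields the desired translation continuity; applying it both to $u$ and to $\nabla u$ handles the full $H^1$ norm.

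As a back-up strategy, one can observe that $\rho$ is a Muckenhoupt $A_2$ weight on the bounded Lipschitz domain $\Omega$ and invoke directly the density theorem for weighted Sobolev spaces with such weights from \cite{kufner1984define} and \cite{kufner1985weighted}. I would first carry out the self-contained translation-and-mollification argument above, and fall back on this citation if the boundary-patch estimates become technically burdensome.
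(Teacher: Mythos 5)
Your opening reduction is exactly the paper's: Proposition \ref{prop:comparabletodistance3} shows $\sigma$ is comparable to $\rho(x)=\dist(x,\partial\Omega)$, so the two weighted spaces coincide with equivalent norms and it suffices to treat $H^1(\Omega,\rho)$. From there the paper simply cites the known density theorem for distance-type weights on Lipschitz domains (Theorem 7.2 of \cite{kufner1985weighted}) --- which is precisely your ``back-up strategy'' --- whereas your main route is a self-contained sketch of the proof of that cited result: interior mollification, a Lipschitz boundary-patch partition of unity, inward translation along a uniform cone direction, and mollification at a much smaller scale. The sketch is sound, and you correctly isolate the one genuinely weighted ingredient, namely continuity of inward translation in $L^2(\Omega,\rho)$, which follows from $\rho(x+\epsilon\nu)\geq c\,\rho(x)$ on the relevant patches together with density of $C_c(\Omega)$ in $L^2(\Omega,\rho)$. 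What your longer route buys is independence from the precise statement of Kufner's theorem (and makes transparent where the Lipschitz boundary and the first power of the distance are used); what the paper's one-line citation buys is brevity, since the result is standard.

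One correction to a side remark: $\rho(x)=\dist(x,\partial\Omega)$ is \emph{not} a Muckenhoupt $A_2$ weight in general. For a $(d-1)$-dimensional boundary the power weight $\dist(x,\partial\Omega)^{\alpha}$ is $A_2$ only for $-1<\alpha<1$, and $\alpha=1$ is the excluded endpoint (for balls centered on $\partial\Omega$ the average of $\rho^{-1}$ blows up). This does not harm your proof, since the $A_2$ claim appears only as an alternative justification and the Kufner density theorems you cite are proved for distance powers directly rather than through the $A_p$ machinery; but you should either drop the $A_2$ assertion or replace it with a direct appeal to Theorem 7.2 of \cite{kufner1985weighted}, as the paper does.
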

\begin{proof}
Density of $C^\infty(\overbar{\Omega})$ in $H^1(\Omega,\rho)$, for $\rho(x):=\dist(x, \partial \Omega)$ and $\Omega$ having a Lipschitz boundary is shown, for example, in Theorem 7.2 of \cite{kufner1985weighted}. By Proposition \ref{prop:comparabletodistance3}, $\sigma$ is comparable to $\rho(x)$, and hence density of $C^\infty(\overbar{\Omega})$ in  $H^1(\Omega,\rho)$ carries over to $H^1(\Omega,\sigma)$. 
\end{proof}

\begin{proof}{\textbf{of Lemma \ref{lem:tildeJcoerciveandinGamma0}}}
Let $\tilde{G}$ be as given in \eqref{eq:tildeGdef}. For $u \in \bar{H}^1(\Omega,\sigma)$,
\begin{align*}
\frac{2}{\lambda}\tilde{G}(\nabla u) &= \int_\Omega (|\nabla u|^2 - 2|\nabla u| + 1) \sigma(x) dx - \int_{\{ |\nabla u| < 1\}} (|\nabla u|^2 - 2|\nabla u| +1 )\sigma(x) dx,\\
&\geq \int_\Omega |\nabla u|^2 \sigma(x) dx - 2\left(\int_\Omega |\nabla u|^2 \sigma(x) dx\right)^{1/2} -1.
\end{align*}
Since $H^1(\Omega,\sigma)$ has a Poincar{\'e} inequality, we obtain that for all $u \in \bar{H}^1(\Omega,\sigma)$,
\begin{equation*}
\tilde{G}(\nabla u) \geq C\normusigma^2 - \lambda\normusigma - \frac{\lambda}{2}.
\end{equation*}
Since $u \mapsto \langle u, g-f \rangle$ is a continuous linear functional on $H^1(\Omega,\sigma)$, we therefore obtain
\begin{equation*}
\tilde{J}(u) \geq C\normusigma^2- (\lambda+ C) \normusigma -\frac{\lambda}{2}.
\end{equation*}
Thus, $\tilde{J}$ is coercive on $\bar{H}^1(\Omega,\sigma)$.

$\tilde{J}$ is obviously proper. It is convex and continuous by Lemma \ref{lem:Gonweightedspaceconvexandcontinuous}. Weak lower semi-continuity then follows since $\tilde{J}$ is convex and continuous. 
\end{proof}

For clarity of presentation we have separated the portion of Theorem \ref{thm:strongtheorem} not implied by Theorem \ref{thm:weaktheorem} into Lemmas \ref{lem:existenceofsolutiontotildeGPlambda}, \ref{lem:equalityofsupandmax}, and \ref{lem:strongextremality}; these correspond to statements \ref{claim:tildegplambdahassolution}, \ref{claim:equivalenceofsupandmax}, and \ref{claim:strongextremality} of Theorem \ref{thm:strongtheorem}, respectively. 

\begin{lemma}
Under the assumptions of Theorem \ref{thm:strongtheorem}, \eqref{prob:tildeGPlambda} has a solution.
\label{lem:existenceofsolutiontotildeGPlambda}
\end{lemma}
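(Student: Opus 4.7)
The plan is to apply the direct method of the calculus of variations to the reduced functional $\tilde{J}$ on the Hilbert space $\bar{H}^1(\Omega,\sigma)$, using Lemma \ref{lem:tildeJcoerciveandinGamma0} as the key input. First, I would note that $\bar{H}^1(\Omega,\sigma)$ is a closed linear subspace of $H^1(\Omega,\sigma)$, since it is the kernel of the continuous linear functional $u \mapsto \int_\Omega u(x)\sigma(x)\,dx$ (continuity following from $\sigma \in L^\infty(\Omega)$ and the embedding of $H^1(\Omega,\sigma)$ into $L^2(\Omega,\sigma)$). In particular $\bar{H}^1(\Omega,\sigma)$ is a reflexive Banach space.

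Next, since the functional in \eqref{prob:tildeGPlambda} is invariant under addition of constants, it suffices to show that $\tilde{J}$ attains its infimum on $\bar{H}^1(\Omega,\sigma)$. I would take a minimizing sequence $\{u_n\} \subset \bar{H}^1(\Omega,\sigma)$ with $\tilde{J}(u_n) \to \inf \tilde{J}$. Coercivity of $\tilde{J}$ on $\bar{H}^1(\Omega,\sigma)$, established in Lemma \ref{lem:tildeJcoerciveandinGamma0}, implies that $\|u_n\|_{H^1(\Omega,\sigma)}$ is bounded. By reflexivity (Banach--Alaoglu), there is a subsequence, still denoted $\{u_n\}$, and an element $u_0 \in \bar{H}^1(\Omega,\sigma)$ such that $u_n \rightharpoonup u_0$ weakly in $H^1(\Omega,\sigma)$; closedness of $\bar{H}^1(\Omega,\sigma)$ ensures the weak limit lies in this subspace.

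Finally, by the weak lower semi-continuity of $\tilde{J}$ (also established in Lemma \ref{lem:tildeJcoerciveandinGamma0}, which follows since a convex continuous functional on a normed space is weakly lower semi-continuous),
\[
\tilde{J}(u_0) \leq \liminf_{n \to \infty} \tilde{J}(u_n) = \inf_{u \in \bar{H}^1(\Omega,\sigma)} \tilde{J}(u),
\]
so $u_0$ is a minimizer of $\tilde{J}$, and hence solves \eqref{prob:tildeGPlambda}.

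The only genuine obstacle is the verification of coercivity, but this has already been handled in Lemma \ref{lem:tildeJcoerciveandinGamma0} using the Poincar\'e inequality for $H^1(\Omega,\sigma)$ from Lemma \ref{lem:propertiesofH1sigma}, which in turn leveraged the comparability of $\sigma$ with $\dist(x,\partial\Omega)$ from Proposition \ref{prop:comparabletodistance3}. Given all these ingredients, the existence argument itself reduces to a standard and short application of the direct method.
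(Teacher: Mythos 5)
Your argument is correct and follows essentially the same route as the paper: the direct method on $\bar{H}^1(\Omega,\sigma)$, using coercivity from Lemma \ref{lem:tildeJcoerciveandinGamma0} to bound the minimizing sequence, Banach--Alaoglu to extract a weak limit, and weak lower semi-continuity (from convexity and continuity) to conclude. The additional remarks on closedness and reflexivity of $\bar{H}^1(\Omega,\sigma)$ are a harmless elaboration of details the paper leaves implicit.
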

\begin{proof}
Since $\tilde{J}$ (defined in \eqref{eq:tildeJdeff}) is proper and coercive (by Lemma \ref{lem:tildeJcoerciveandinGamma0}), the infimum in question is finite. Let $(u_n)_{n=1}^\infty \subset \bar{H}^1(\Omega,\sigma)$ be a minimizing sequence, i.e.
\begin{equation*}
\lim_{n \rightarrow \infty} \tilde{J}(u_n) = \inf_{u \in \bar{H}^1(\Omega,\sigma)} \tilde{J}(u_n).
\end{equation*}
Since this infimum is finite, coercivity of $\tilde{J}$ implies that $(u_n)_{n=1}^\infty$ is bounded in $\bar{H}^1(\Omega,\sigma)$. By Banach-Alaoglu there exists a weakly convergent subsequence converging to some $u_0 \in \bar{H}^1(\Omega,\sigma)$, and by weak lower semi-continuity of $\tilde{J}$, $u_0$ must be a minimizer of $\tilde{J}(u, \nabla u)$ over $H^1(\Omega,\sigma)$. Thus, $u_0$ is a solution to \eqref{prob:tildeGPlambda}.
\end{proof}

\begin{lemma}
\label{lem:equalityofsupandmax}
Under the assumptions of Theorem \ref{thm:strongtheorem}, 
\begin{equation}
\sup \eqref{prob:GPlambda} = \max \eqref{prob:tildeGPlambda}.\label{eq:equalityofinfandmin}
\end{equation}
\end{lemma}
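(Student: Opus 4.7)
The plan is to prove two inequalities, one of which is essentially immediate from set inclusion, the other requiring an approximation argument based on the density result of Lemma \ref{lem:densityofsmoothfunctions}.

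For the inequality $\sup \eqref{prob:GPlambda} \leq \max \eqref{prob:tildeGPlambda}$, I would observe that since $\sigma \in L^\infty(\Omega)$ by Lemma \ref{lem:sigmaLinfinity}, the weight $\sigma$ is bounded, so for any $u \in H^1(\Omega)$ we have $\normusigma \leq \norm{\sigma}_{L^\infty(\Omega)}^{1/2} \norm{u}_{H^1(\Omega)} < +\infty$, hence $H^1(\Omega) \subset H^1(\Omega,\sigma)$. Since the two functionals being optimized agree formally, the supremum over the larger set dominates, and using Lemma \ref{lem:existenceofsolutiontotildeGPlambda} to write $\sup \eqref{prob:tildeGPlambda} = \max \eqref{prob:tildeGPlambda}$ concludes this direction.

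For the reverse inequality $\max \eqref{prob:tildeGPlambda} \leq \sup \eqref{prob:GPlambda}$, I would proceed by approximation. Let $u_0 \in H^1(\Omega,\sigma)$ be a solution to \eqref{prob:tildeGPlambda} given by Lemma \ref{lem:existenceofsolutiontotildeGPlambda}. By Lemma \ref{lem:densityofsmoothfunctions}, there is a sequence $(u_n)_{n=1}^\infty \subset C^\infty(\bar{\Omega})$ with $u_n \to u_0$ in $H^1(\Omega,\sigma)$. The key observation is that the functional $\tilde{J}$ is continuous on $H^1(\Omega,\sigma)$: the gradient penalty term is continuous on $L^2(\Omega,\sigma;\RR^d)$ by Lemma \ref{lem:Gonweightedspaceconvexandcontinuous}, and the linear term $u \mapsto \langle u, g-f \rangle$ is continuous on $H^1(\Omega,\sigma)$ by the bound \eqref{eq:boundingugminusf}, which extends from $C^\infty(\bar{\Omega})$ to all of $H^1(\Omega,\sigma)$ via Lemma \ref{lem:densityofsmoothfunctions}. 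Hence $\tilde{J}(u_n) \to \tilde{J}(u_0)$, i.e., the value of the WGAN-GP functional at $u_n$ converges to $\max \eqref{prob:tildeGPlambda}$. Since each $u_n$ lies in $C^\infty(\bar{\Omega}) \subset H^1(\Omega)$, each value of the functional at $u_n$ is bounded above by $\sup \eqref{prob:GPlambda}$, and passing to the limit yields the claim.

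The main potential obstacle is the well-posedness of $\langle u_0, g-f\rangle$ on $H^1(\Omega,\sigma)$, since $u_0$ need not lie in $L^2(\Omega)$, and the continuity of this linear term on the weighted space; both issues are already resolved by the estimate \eqref{eq:boundingugminusf} combined with the density result, as noted in the paragraph following Lemma \ref{lem:propertiesofH1sigma}. Everything else is a routine application of continuity and the fact that smooth functions serve as a bridge between the two admissible classes.
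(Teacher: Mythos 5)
Your proof is correct and follows essentially the same route as the paper's: both arguments rest on the density of $C^\infty(\bar{\Omega})$ in the relevant spaces together with continuity of the functionals, your version merely unpacking this into two inequalities (one via the inclusion $H^1(\Omega)\subset H^1(\Omega,\sigma)$, the other via smooth approximation of the maximizer). You also correctly flag and resolve the only genuine subtlety, namely that $\langle u, g-f\rangle$ on $H^1(\Omega,\sigma)$ is defined by continuous extension via \eqref{eq:boundingugminusf}.
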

\begin{proof}
 Note that the functionals in \eqref{prob:GPlambda} and \eqref{prob:tildeGPlambda} agree on $C^\infty(\overbar{\Omega})$ and are continuous on both $H^1(\Omega)$ and $\honesigma$, respectively. Since $C^\infty(\overbar{\Omega})$ is dense in both spaces, \eqref{eq:equalityofinfandmin} then follows.
\end{proof}
\begin{lemma}
Under the assumptions of Theorem \ref{thm:strongtheorem}, if $u_0$ solves \eqref{prob:tildeGPlambda} and $Q_0$ solves \eqref{prob:CPlambda}, then \eqref{eq:strongextremality} holds.\label{lem:strongextremality}
\end{lemma}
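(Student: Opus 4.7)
The plan is to specialize the pointwise inequality from Lemma \ref{lem:funccoercedbygrad} to $\xi^* = \mathbf{w}_0(x)$ and $\xi = \nabla u_0(x)$, integrate over $\Omega$, and show that optimality of $u_0$ together with Lemma \ref{lem:equalityofsupandmax} forces the resulting integral inequality to collapse to equality. This will pin down $\mathbf{w}_0$ pointwise, and then Proposition \ref{prop:weakbetweenBPandCP} will identify $\mathbf{w}_0 = \mathbf{w}_{Q_0}$, yielding \eqref{eq:strongextremality}.

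The first step is the identity
\begin{equation*}
\int_\Omega \nabla u_0 \cdot \mathbf{w}_0\, dx = \langle u_0, g-f \rangle.
\end{equation*}
For $u \in H^1(\Omega)$ this is simply the weak form of $\nabla \cdot \mathbf{w}_0 = f-g$, but $u_0$ is only known to live in $H^1(\Omega,\sigma)$, so some care is needed. I would approximate $u_0$ by $u_n \in C^\infty(\bar{\Omega})$ in the $H^1(\Omega,\sigma)$ norm via Lemma \ref{lem:densityofsmoothfunctions}; the right-hand side converges by the continuous extension of $\langle\cdot,g-f\rangle$ to $H^1(\Omega,\sigma)$ coming from \eqref{eq:boundingugminusf}. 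For the left-hand side, note that optimality of $\mathbf{w}_0$ in \eqref{prob:BPlambda} gives $\int_\Omega H(x,|\mathbf{w}_0|)\,dx < \infty$, and the quadratic part of $H$ implies $\sigma^{-1/2}\mathbf{w}_0 \in L^2(\Omega;\mathbb{R}^d)$; hence by Cauchy-Schwarz,
\begin{equation*}
\Bigl|\int_\Omega (\nabla u_n - \nabla u_0)\cdot \mathbf{w}_0\,dx \Bigr| \le \|\nabla u_n - \nabla u_0\|_{L^2(\Omega,\sigma;\mathbb{R}^d)} \,\|\sigma^{-1/2}\mathbf{w}_0\|_{L^2(\Omega;\mathbb{R}^d)} \to 0.
\end{equation*}

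Next, I integrate the inequality in Lemma \ref{lem:funccoercedbygrad} over $\Omega$ with $\xi^* = \mathbf{w}_0(x)$ and $\xi = \nabla u_0(x)$, using the identity above and the equality $\int_\Omega H(x,|\mathbf{w}_0|)\,dx = \min\eqref{prob:BPlambda}$. This produces
\begin{equation*}
\tilde{J}(u_0) + \min\eqref{prob:BPlambda} \;\ge\; \int_\Omega \frac{1}{2\lambda\sigma(x)}\Bigl|\mathbf{w}_0 + \lambda\sigma(|\nabla u_0|-1)_+ \tfrac{\nabla u_0}{|\nabla u_0|}\Bigr|^2 dx.
\end{equation*}
Because $u_0$ is a minimizer of $\tilde{J}$, we have $\tilde{J}(u_0) = -\max\eqref{prob:tildeGPlambda}$; by Lemma \ref{lem:equalityofsupandmax} this equals $-\sup\eqref{prob:GPlambda}$, which by Proposition \ref{prop:weakbetweenGPandBP} equals $-\min\eqref{prob:BPlambda}$. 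Hence the left-hand side vanishes. Since the integrand on the right is non-negative, we obtain
\begin{equation*}
\mathbf{w}_0(x) + \lambda\sigma(x)(|\nabla u_0(x)|-1)_+\, \tfrac{\nabla u_0(x)}{|\nabla u_0(x)|} = 0 \qquad \text{a.e. in } \Omega,
\end{equation*}
and by Proposition \ref{prop:weakbetweenBPandCP}, $\mathbf{w}_{Q_0} = \mathbf{w}_0$, which is exactly \eqref{eq:strongextremality}.

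The main obstacle is the weak-divergence identity in the first step: $u_0$ lies in $H^1(\Omega,\sigma)$ rather than in $H^1(\Omega)$, so the test-function class for $\nabla\cdot\mathbf{w}_0 = f-g$ must be enlarged. The resolution hinges precisely on the congested-transport structure: the finiteness of $\int_\Omega H(x,|\mathbf{w}_0|)\,dx$ encodes both $\mathbf{w}_0 \in L^2(\Omega;\mathbb{R}^d)$ and the sharper $\sigma^{-1/2}\mathbf{w}_0 \in L^2(\Omega;\mathbb{R}^d)$, and this latter fact is exactly what allows $\nabla u_n \to \nabla u_0$ in the weighted norm to pass to the unweighted pairing against $\mathbf{w}_0$.
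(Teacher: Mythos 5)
Your proposal is correct and follows essentially the same route as the paper's proof: evaluate the pointwise inequality of Lemma \ref{lem:funccoercedbygrad} at $\xi^* = \mathbf{w}_0(x)$, $\xi = \nabla u_0(x)$, integrate, and use optimality of $u_0$ together with $\max\eqref{prob:tildeGPlambda} = \sup\eqref{prob:GPlambda} = \min\eqref{prob:BPlambda}$ and $\mathbf{w}_{Q_0} = \mathbf{w}_0$ to force the non-negative integrand to vanish. Your careful justification of $\int_\Omega \nabla u_0 \cdot \mathbf{w}_0\,dx = \langle u_0, g-f\rangle$ for $u_0 \in H^1(\Omega,\sigma)$ via density of $C^\infty(\bar{\Omega})$ and the weighted bound $\sigma^{-1/2}\mathbf{w}_0 \in \bigLnosigma$ is a detail the paper's terse proof leaves implicit, and it is handled correctly.
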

\begin{proof}
Evaluating inequality \eqref{eq:funccoercedbygrad} at $\xi^* = \ww_{Q_0}(x)$, and $\xi = \nabla u_0(x)$ and integrating, we get
\begin{equation*}
0 \geq \int_\Omega \frac{1}{2\lambda \norm{\sigma}_{L^\infty(\Omega)}}|\ww_{Q_0} + \lambda \sigma(x) (|\nabla u_0 (x)| - 1)_+ \frac{\nabla u_0(x)}{|\nabla u_0(x)|}|^2 dx,
\end{equation*}
whence the first equality of \eqref{eq:strongextremality} follows. To obtain the second, note that for almost all $x$ with $\ww_{Q_0}(x) \neq 0$, the first equality implies that
\begin{equation*}
\nabla u_0(x) = -\alpha(x) \ww_{Q_0}(x)
\end{equation*} 
for some $\alpha(x) >0$. Taking the magnitude of both sides of the first equality gives
\begin{equation*}
\alpha(x) = \frac{1}{\lambda \sigma(x)} + \frac{1}{|\ww_{Q_0}(x)|},
\end{equation*} 
as desired.

\end{proof}

\end{document}